\newcommand{\corr}{(\Letter)}
\begin{document}

\title{Enhancing Sharpness-Aware Minimization by Learning Perturbation Radius}


\author{Xuehao Wang\inst{1}$^{,\dag}$ \and
Weisen Jiang\inst{1,2}$^{,\dag}$ \and
Shuai Fu\inst{1} \and
Yu Zhang\inst{1} \corr
}

\toctitle{Enhancing Sharpness-Aware Minimization by Learning Perturbation Radius}
\tocauthor{Xuehao~Wang, Weisen~Jiang,Shuai~Fu,Yu~Zhang}


\authorrunning{X. Wang et al.}


\institute{Department of Computer Science and Engineering, Southern University of Science and Technology, Shenzhen, China
\and
The Hong Kong University of Science and Technology, Hong Kong, China
\\
\email{\{xuehaowangfi,waysonkong,fus.jayce,yu.zhang.ust\}@gmail.com}
}

\renewcommand{\thefootnote}{\dag}
\footnotetext[1]{Equal Contribution.}

\maketitle

\begin{abstract}
Sharpness-aware minimization (SAM) is to improve model generalization by searching for flat minima in the loss landscape. 
The SAM update consists of one step for computing the perturbation and the other for computing the update gradient. Within the two steps, the choice of the perturbation radius is crucial to the performance of SAM, but finding an appropriate perturbation radius is challenging. In this paper, we propose a bilevel optimization framework called LEarning the perTurbation radiuS (LETS) to learn the perturbation radius for sharpness-aware minimization algorithms. 
Specifically, in the proposed LETS method, the upper-level problem aims at seeking a good perturbation radius by minimizing the squared generalization gap between the training and validation losses, while the lower-level problem is the SAM optimization problem.
Moreover, the LETS method can be combined with any variant of SAM. Experimental results on various architectures and benchmark datasets in computer vision and natural language processing demonstrate the effectiveness of the proposed LETS method in improving the performance of SAM.

\keywords{Sharpness-Aware Minimization  \and Bilevel Optimization \and Hyperparameter Optimization}
\end{abstract}

\section{Introduction}
Deep neural networks have demonstrated remarkable performance across various fields~\cite{he2016deep,zagoruyko2016wide},
but they tend to overfit on the training data with poor ability of generalization due to overparameterization~\cite{zhang2021understanding}.
The loss function landscape is intricate and non-linear, characterized by numerous local minima with varying generalization capabilities. 
Several
studies~\cite{hochreiter1994simplifying,keskar2017on,Jiang2020Fantastic} have explored the connection between the geometry of the loss function surface and the generalization ability of neural networks, and have revealed that flatter minima tend to result in better generalization performance than sharper minima~\cite{karolina2017,petzka2021relative,keskar2017on,Jiang2020Fantastic}. 

Sharpness-aware minimization (SAM)~\cite{foret2021sharpness} is an optimization method that solves a min-max optimization problem to seek flat minima.
Specifically, SAM aims to find a model parameterized by $\vtheta$ such that its neighbors in parameter space also have good performance.
SAM first computes the worst-case perturbation $\vepsilon$
that maximizes the training loss within 
a neighborhood specified by a perturbation radius $\rho$, and then minimizes the training loss w.r.t. the perturbed model $\vtheta+\vepsilon$.
Many variants of SAM are proposed to improve its
effectiveness~\cite{zhuang2022surrogate,kwon2021asam,zhao2022penalizing,liu2022random} and efficiency~\cite{liu2022towards,zhao2022ss,jiang2023adaptive}.

\begin{figure}
\centering
\subfigure[SAM. \label{fig:mrpc_sam}]{\includegraphics[width=0.23\textwidth]{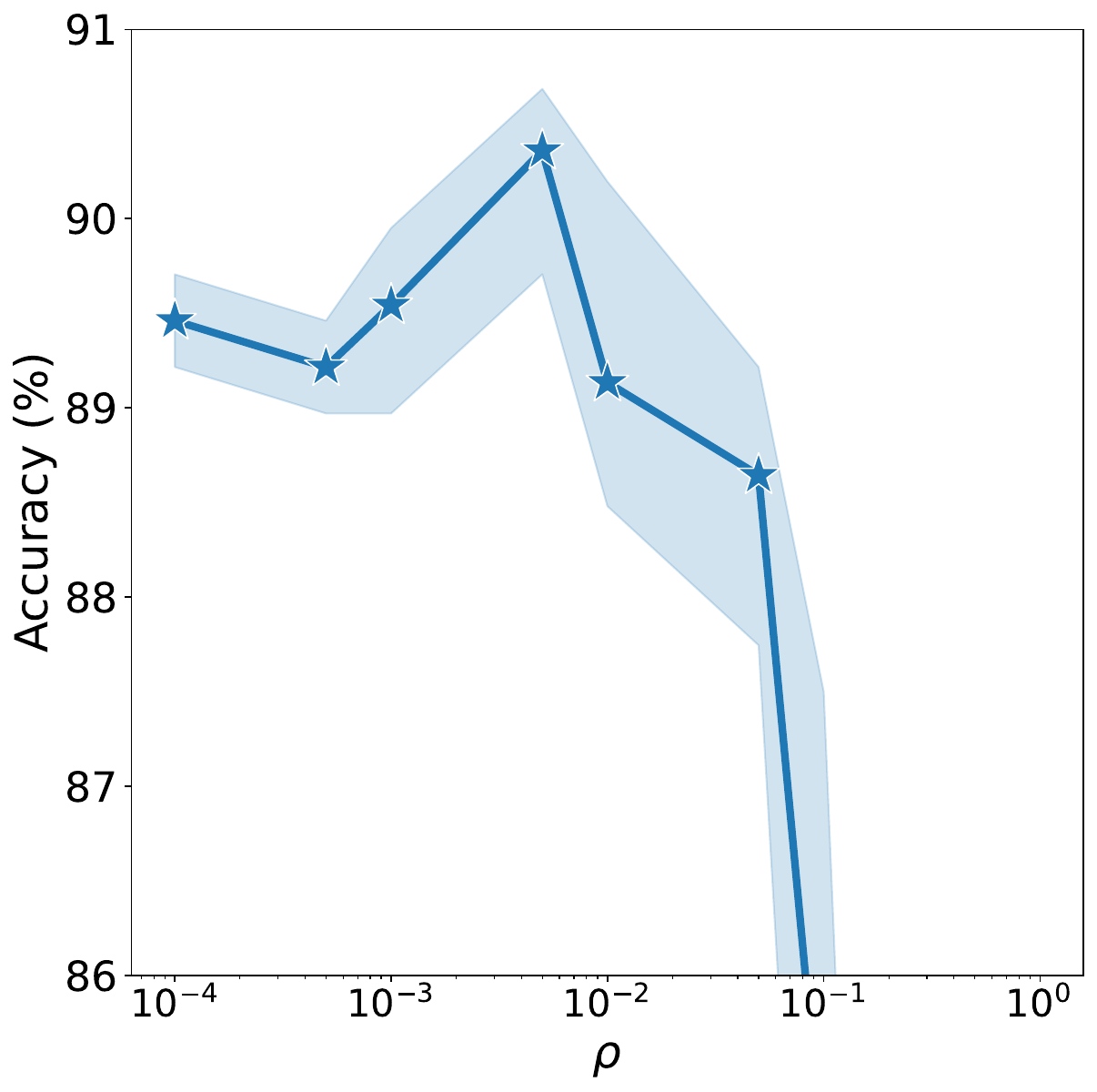}}
\quad
\subfigure[ASAM. \label{fig:mrpc_asam}]{\includegraphics[width=0.23\textwidth]{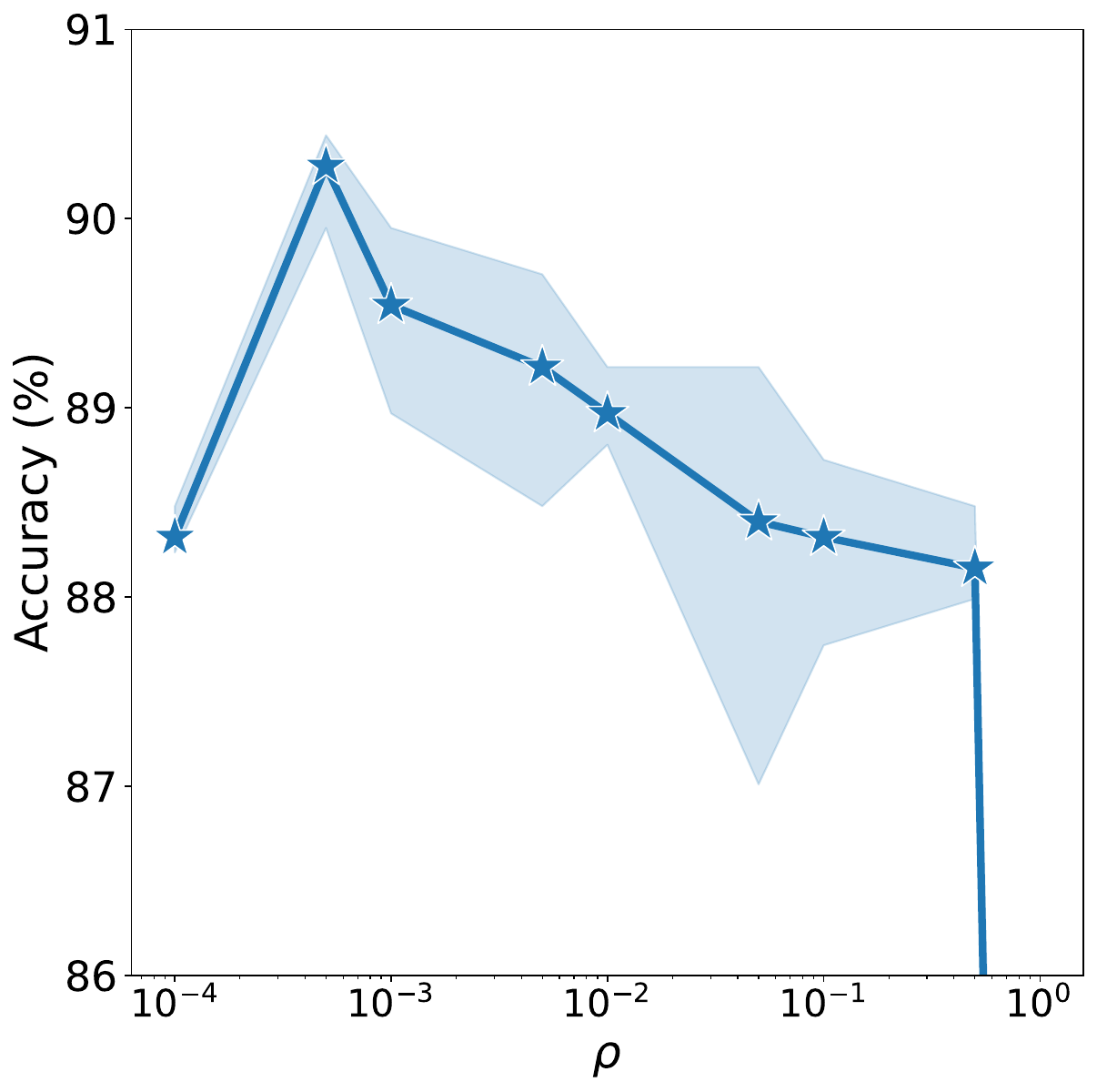}}
\vskip -.14in
\caption{Classification accuracy of SAM and ASAM with different $\rho$'s on \textit{MRPC} using \textit{DeBERTa}. As shown, the performance is sensitive to $\rho$.}
\label{fig:rho}
\vskip -.2in
\end{figure}

The perturbation radius $\rho$ controls the strength of penalizing sharp minima and plays an important role in SAM~\cite{foret2021sharpness,andriushchenko2022towards,kwon2021asam,zhuang2022surrogate}.
Figure \ref{fig:rho}
shows the classification accuracy of SAM and ASAM~\cite{kwon2021asam} w.r.t. different $\rho$'s on the \textit{MRPC} dataset using the \textit{DeBERTa} network, where the corresponding experimental setups are shown in Section \ref{sec:glue}.
As can be seen,	
SAM and ASAM prefer different $\rho$'s, and their performance is sensitive to $\rho$, emphasizing the crucial need for careful selection of $\rho$. 
To deal with this issue, recent attempts~\cite{foret2021sharpness,kwon2021asam,zhuang2022surrogate} perform grid search on $\rho$, which is straightforward but time-consuming.

To learn
the perturbation radius $\rho$ more efficiently, in this paper,
we propose a \underline{LE}arning the per\underline{T}urbation radiu\underline{S} (LETS) method 
by formulating the learning of $\rho$ as a bilevel optimization problem.
Specifically, in the lower-level problem,
a SAM model is obtained based on the training data and a given $\rho$, while in the upper-level problem, $\rho$ is updated by minimizing the gap between the validation and training losses based on the obtained SAM model, which is a function of $\rho$.
As the lower-level problem is usually nonconvex, 
we propose a gradient-based algorithm for updating model parameters and $\rho$
alternatively.
Experiments conducted on several benchmark datasets across diverse fields demonstrate that the proposed LETS is effective in learning a suitable radius.

In summary, our contributions are three-fold:
\begin{enumerate*}[(i), series = tobecont, itemjoin = \quad]
\item We formulate 
the problem of learning the perturbation radius as bilevel optimization and propose a 
gradient-based algorithm (called LETS-SAM) to adjust the radius for SAM.
\item We perform extensive experiments on various datasets  across computer vision and natural language process tasks as well as various network architectures across convolution-based and transformer-based networks
to verify that
LETS-SAM performs better than SAM.
\item LETS 
is general and can be combined with any SAM algorithm.
We integrate it into ASAM to propose LETS-ASAM.
Experimental results show
that LETS-ASAM achieves better performance than ASAM,
demonstrating the proposed LETS is effective.
\end{enumerate*}
	
\textbf{Notations.}	
Lowercase and uppercase boldface letters denote vectors (\eg, $\vx$) and matrices (\eg, $\vX$), respectively.
$\ell_2$-norm of $\vx$ is denoted by $\|\vx\|$.
$\operatorname{diag}(\vv)$ constructs a diagonal matrix with the vector $\vv$ on the diagonal.
For a vector $\vv\in \bR^d$,
$[\vv]^2\equiv [v_1^2, \dots, v_d^2]$ (resp. $|\vv|\equiv [|v_1|, \dots, |v_d|]$) denotes the elementwise square (resp. absolute) of $\vv$.
$\vI$ is the identity matrix. $\mathbf{1}_d$
is a $d$-dimensional all-ones vector. $\hD^{tr}=\{(\vx^{tr}_i, y^{tr}_i)\}_{i=1}^{N^{tr}}$, $\hD^{vl}=\{(\vx^{vl}_i, y^{vl}_i)\}_{i=1}^{N^{vl}}$ and $\hD^{ts}=\{(\vx^{ts}_i, y^{ts}_i)\}_{i=1}^{N^{ts}}$ represent the training, validation, and testing datasets, respectively. $f(\vx; \vtheta)$ denotes a model parameterized by $\vtheta$. $\hL(\hD;\vtheta)=\frac{1}{|\hD|}\sum_{(\vx_i, y_i) \in \hD}$
$\ell(f(\vx_i;\vtheta), y_i)$ denotes the loss on data set $\hD$ using model $\vtheta$,
where $\ell(\cdot, \cdot)$ denotes a loss function (e.g., cross-entropy loss for classification).
$\nabla \hL\left(\hD;\vtheta\right)$ and $\nabla^2 \hL\left(\hD;\vtheta\right)$ denote the gradient and Hessian of $\hL\left(\hD;\vtheta\right)$ w.r.t $\vtheta$, respectively.
	
\section{Related Work}

\textbf{Generalization and Loss Landscape.}
As deep neural networks are powerful enough to memorize all training data,
seeking a model with better generalization ability is crucial to mitigate overfitting.
Recently, various works \cite{Jiang2020Fantastic,karolina2017,keskar2017on} conduct extensive experiments on various datasets to study the relationship between loss landscape and generalization, and found that
a flatter minima results in a better generalization. Therefore, several algorithms are proposed to improve the generalization ability of models by seeking flatter minima. For example, \cite{bisla2022low,zhou2019toward} add noise to model parameters, SWA and its variants~\cite{cha2021swad,izmailov2018averaging} average model parameters during training, 
\cite{zhao2022penalizing} penalizes gradient norm, and sharpness-aware minimization (SAM) as well as its variants~\cite{liu2022random,zhuang2022surrogate,foret2021sharpness,kwon2021asam} solves a min-max problem to search flat minima explicitly. 
These approaches have demonstrated superior results in various fields, including supervised learning~\cite{bisla2022low,qu2022generalized,zhao2022penalizing,foret2021sharpness,izmailov2018averaging}, 
transfer learning~\cite{foret2021sharpness,zhuang2022surrogate}, domain generalization~\cite{cha2021swad}, 
federated learning~\cite{qu2022generalized}, and natural language processing~\cite{bahri2022sharpness}. 

\textbf{Sharpness-Aware Minimization (SAM).}
SAM~\cite{foret2021sharpness}
seeks flat minima via
solving a min-max optimization problem as
\begin{align}
\min_{\vtheta} \max_{\| \vepsilon\| \leq \rho } \hL(\hD^{tr}; \vtheta + \vepsilon),
\label{eq:sam}
\end{align}
where $\rho > 0$ is a perturbation radius.
Intuitively, 
SAM aims to find a model $\vtheta$
such that all its neighbor models (within an $\ell_2$ ball of radius $\rho$) have low losses.
Due to the infeasible computation cost of solving the inner maximization problem for nonconvex losses, SAM approximates it via the first-order Taylor approximation and obtains the update rule at iteration $t$ as
\begin{align*}
\vtheta_{t+1} = \vtheta_{t} - \eta \nabla \hL\left(\hD^{tr}; \vtheta_t + \rho \frac{\nabla \hL(\hD^{tr}; \vtheta_t)}{\| \nabla \hL(\hD^{tr}; \vtheta_{t})\|}\right),
\end{align*}
where 
$\eta$ denotes the step size.
Note that SAM involves two gradient calculations at each iteration. To improve its efficiency, many methods have been proposed to reduce the computation cost of SAM.
For example,
Look-SAM \cite{liu2022towards} and RST~\cite{zhao2022ss} employ the SAM update periodically or randomly, respectively,
while 
AE-SAM \cite{jiang2023adaptive}
proposes 
an adaptive policy to
employ SAM 
only when the model is in sharp regions.
ESAM \cite{du2021efficient} proposes to perturb the chosen part of the model and only uses a selected subset of samples to compute the gradients.
To improve the effectiveness of SAM,
GSAM~\cite{zhuang2022surrogate}
proposes to minimize a surrogate gap $\max_{\| \vepsilon\| \leq \rho } \hL(\hD^{tr}; \vtheta + \vepsilon) - \hL(\hD^{tr}; \vtheta)$,
while 
RSAM~\cite{liu2022random}
simply injects Gaussian noises to perturb model parameters and ASAM~\cite{kwon2021asam} designs an adaptive sharpness measure by re-scaling.

For most of the SAM-based methods,
the perturbation radius $\rho$
is crucial
to their performance~\cite{foret2021sharpness,andriushchenko2022towards,kwon2021asam,zhuang2022surrogate}. 
Instead of performing a grid search over $\rho$
by cross-validation,
which is time-consuming, in this paper, we propose a gradient-based method to learn $\rho$.

\textbf{Bilevel Optimization.} 
Bilevel optimization
is first introduced in \cite{bracken1973mathematical}
and successfully used in a variety of areas,
for example, 
hyperparameter optimization \cite{liu2021value,feurer2019hyperparameter,Franceschi2018},
meta-learning \cite{Franceschi2018,jiang2021effective,jiang2022subspace,ye2021multiobjective},
prompt tuning~\cite{jiang2023effective},
and reinforcement learning \cite{stadie2020learning}.
Bilevel optimization consists of two problems:
a lower-level problem and an upper-level one.
The former acts as a constraint for the latter.
When the lower-level problem is convex,
one approach is to reformulate 
the bilevel problem as a single-level problem by replacing the lower-level problem with the first-order optimality condition~\cite{allende2013solving,sinha2019using}.
However,
in deep neural networks,
problems are usually nonconvex. 
Recently, gradient-based first-order methods~\cite{ghadimi2018approximation,hong2020two,liao2018reviving} for bilevel optimization have become popular due to their efficiency and effectiveness.
	
\section{The LETS Method}
In this section,
we formulate the objective function of the LETS method to learn the perturbation radius $\rho$
as bilevel optimization (i.e., Section \ref{sec: problem})
and 
propose a gradient-based algorithm to 
learn $\rho$ (i.e., Section \ref{sec: LETS-sam}).
Considering the generality of the proposed method, it can be combined with any SAM algorithm, and an example of integrating it into 
ASAM~\cite{kwon2021asam} is shown in Section \ref{sec:LETS-sam}. 
	
\subsection{Problem Formulation}\label{sec: problem}
	
We consider the SAM optimization in problem \eqref{eq:sam}.
Let $\vtheta^\star(\rho)$ be the solution,
which is a function of the perturbation radius $\rho$.
Though SAM 
has shown to be effective,
its generalization performance is sensitive to the choice of $\rho$ (i.e., Figure \ref{fig:rho}).
Instead of using grid search, which is simple but time-consuming,
we propose to learn $\rho$ in an end-to-end manner.
To achieve this,
we formulate the problem of learning $\rho$ into bilevel optimization, where the lower-level objective is the SAM problem and 
the upper-level objective is a generalization metric for $\vtheta^\star(\rho)$.
The choice of generalization metric is flexible, 
for example, the loss value on the validation set $\hL(\hD^{vl};\vtheta^\star(\rho))$, the generalization gap $\hL(\hD^{vl};\vtheta^\star(\rho)) - \hL(\hD^{tr};\vtheta^\star(\rho))$, or its square $\frac{1}{2}\left(\hL(\hD^{vl};\vtheta^\star(\rho)) - \hL(\hD^{tr};\vtheta^\star(\rho))\right)^2$.
Empirical results (i.e., Table~\ref{table:differen_matric_sam} in Section \ref{sec:abl}) show that the last is better and therefore is used.
Formally, the objective function of the proposed LETS method  
is formulated as
\begin{align}
    \underset{\rho \in (0, \infty)}{\min}&\quad\frac{1}{2}\left(\hL(\hD^{vl};\vtheta^\star(\rho)) - \hL(\hD^{tr};\vtheta^\star(\rho))\right)^2\label{eq:upper-level} \\ 
    \text{s.t.}&\quad\vtheta^\star (\rho)=\argmin_\vtheta \max_{\|\vepsilon \| \leq \rho} \hL(\hD^{tr};\vtheta+\vepsilon).
    \label{eq:lower-level}
\end{align}

\subsection{Learning Perturbation Radius for SAM}
\label{sec: LETS-sam}
	
When the lower-level problem 
is convex, 
one can seek the optimal solution 
$\vtheta^\star (\rho)$ by solving the low-level problem (\ref{eq:lower-level})
and update $\rho$
in the upper level by performing one gradient descent step,
where 
hyper-gradient $\nabla_{\rho}\frac12 (\hL(\hD^{vl};\vtheta^\star(\rho)
- \hL(\hD^{tr};\vtheta^\star(\rho))
)^2$
can be computed by iterative differentiation~\cite{pedregosa2016hyperparameter} or
approximate implicit differentiation~\cite{koh2017understanding}. 
However,
in deep neural networks, 
the lower-level problem is usually nonconvex,
thus,
seeking $\vtheta^\star (\rho)$
is computationally infeasible.
To address this problem,
we propose a gradient-based algorithm for updating the model parameters and $\rho$ alternatively.
The detailed procedure is shown in Algorithm \ref{alg:bsam}.

At iteration $t$, we sample
a batch $\hB_t^{tr}$ from the training dataset
and $\hB_t^{vl}$ from the validation dataset (i.e., steps \ref{alg-step:tr} and \ref{alg-step:vl}).
For the 
lower-level problem,
we take a gradient descent update (i.e., steps \ref{step:sam} and \ref{step:sam-2}) as
\begin{align}
\label{eq:theta_t}
\vtheta_{t+1} (\rho_{t}) =&\ \vtheta_{t} - \eta \nabla \hL\left(\hD^{tr};\vtheta_{t}+\rho_{t}\hat{\vepsilon}_{t}^{\text{(SAM)}}\right),
\end{align}
where $\hat{\vepsilon}_{t}^{\text{(SAM)}}\equiv \frac{\nabla \hL(\hD^{tr};\vtheta_{t})}{\|\nabla \hL(\hD^{tr};\vtheta_{t})\|}$ and  $\eta$ is the step size.
Here $\vtheta_{t+1} (\rho_{t})$ is an approximate solution to the SAM problem as we only conduct the gradient descent step once. Obviously $\vtheta_{t+1} (\rho_{t})$ is a function of $\rho_t$.

In the upper-level problem,
we perform a gradient descent step for updating $\rho$ (i.e., step \ref{step:rho-c}) as
\begin{align*}
\rho_{t+1} & \!= \rho_{t} \!-\! \beta \nabla_{\rho_{t}} \frac{1}{2}(\hL(\hD^{vl};\vtheta_{t+1} (\rho_{t})) - \hL(\hD^{tr};\vtheta_{t+1} (\rho_{t})))^2,
\end{align*}
where $\beta$ is the step size. $\nabla_{\rho_{t}} \frac{1}{2} (\hL(\hD^{vl};\vtheta_{t+1} (\rho_{t})) - \hL(\hD^{tr};\vtheta_{t+1} (\rho_{t})))^2$ is computed by the chain rule (i.e., steps \ref{step:rho-a} and \ref{step:rho-b}) as
$-  \eta  \nabla_{\vtheta_{t+1}}^\top\frac{1}{2}(\hL(\hD^{vl};\vtheta_{t+1}) 
- \hL(\hD^{tr};\vtheta_{t+1}))^2  \nabla^2 \! \hL\left(\hD^{tr};\vtheta_{t}\!+\!\rho_{t}\hat{\vepsilon}_{t}^{\text{(SAM)}}\right) \! \hat{\vepsilon}_{t}^{\text{(SAM)}}$. Details of the derivation are provided in Appendix A.
Here the first term of gradient is easy to compute as

\begin{align*}
\nabla_{\vtheta_{t+1}} \frac{1}{2}(\hL(\hD^{vl};\vtheta_{t+1}) \!-\! \hL(\hD^{tr};\vtheta_{t+1}))^2 &= (\hL(\hD^{vl};\vtheta_{t+1}) \!-\! \hL(\hD^{tr};\vtheta_{t+1}))  \\
&\left(\nabla_{\vtheta_{t+1}}\hL(\hD^{vl};\vtheta_{t+1}) \!-\!\nabla_{\vtheta_{t+1}} \hL(\hD^{tr};\vtheta_{t+1})\right)
\end{align*}
(i.e., steps \ref{step:sam-a} and \ref{step:sam-b}).
The second term needs to compute a Hessian, which is computationally expensive for large models like deep neural networks.
Following \cite{bottou2018optimization,khan2018fast}, 
the Hessian $\nabla^2 \hL \left(\hD^{tr};\vtheta_{t}+\rho_{t}\hat{\vepsilon}_t^{\text{(SAM)}}\right)$ can be approximated by a first-order derivative (i.e., step \ref{step: hessian}) as
\begin{align}
\operatorname{diag}\!\left(\!\left[ \! \nabla \hL\!\left(\hD^{tr};\vtheta_{t}\!+\!\rho_{t}\hat{\vepsilon}_t^{\text{(SAM)}}\right)\!\right]^2\right).\!\! \label{eq:hessian-approx}
\end{align}

As proved in Appendix B
, LETS-SAM has a convergence rate of $\mathcal{O}(\frac{1}{\sqrt{T}})$, which is the same as SAM~\cite{andriushchenko2022towards} and its variants~\cite{qu2022generalized,jiang2023adaptive} under similar conditions. The details of the theoretical analysis are provided in Appendix B.
Hence, adjusting the perturbation radius does not affect the convergence speed.
	
\begin{algorithm}[!t]
\caption{
    \underline{LE}arning per\underline{T}urbation radiu\underline{S} ({\color{darkred}LETS-SAM} and {\color{darkblue}LETS-ASAM}).}
\label{alg:bsam}
\begin{algorithmic}[1]
    \Require training set $\hD^{tr}$, validation set $\hD^{vl}$; 
    stepsizes $\beta$ and $\eta$, \#iterations $T$; model parameter $\vtheta$; initialization $\rho_0$ and $\vtheta_0$; {\color{darkblue}$\xi$ for LETS-ASAM};
    
    \For{$t=0,\dots, T-1$}
    \State sample a mini-batch training data $\hB_{t}^{tr}$ from $\hD^{tr}$; \label{alg-step:tr}
    \State sample a mini-batch validation data $\hB_{t}^{vl}$ from $\hD^{vl}$; \label{alg-step:vl}
    \State  $\vg_{t}^{tr}=\nabla \hL(\hB_{t}^{tr}; \vtheta_t)$;
    \State {\color{darkred}if LETS-SAM: $\hat{\vg}_{t}^{tr} = \nabla \hL\left(\hB_{t}^{tr}; \vtheta_t + \rho_{t} \frac{\vg_{t}^{tr}}{\|\vg_{t}^{tr}\|}\right)$;} \label{step:sam}
\State {\color{darkblue}if LETS-ASAM: $\hat{\vg}_{t}^{tr} \!=\! \nabla \hL\left(\hB_{t}^{tr}; \vtheta_t + \rho_{t} \frac{\vT_{\vtheta_{t}}^2 \vg_{t}^{tr}}{\|\vT_{\vtheta_{t}}\vg_{t}^{tr} \|}\right)$,}
    \Statex \;\;\;\; {\color{darkblue}where $\vT_{\vtheta_{t}}$ is computed by Eq. \eqref{T_thetax};\label{step:asam-1} }\label{step:asam}
    \State $\vtheta_{t+1} = \vtheta_t - \eta\hat{\vg}_{t}^{tr}$; \label{step:sam-2}
    \State  $\bar{\vg}_{t}^{tr} = \nabla \hL(\hB_{t}^{tr}; \vtheta_{t+1})$ and $\bar{\vg}_{t}^{vl} = \nabla \hL(\hB_{t}^{vl}; \vtheta_{t+1})$; \label{step:sam-a}
    \State $\vg_a = (\hL(\hB_{t}^{vl};\vtheta_{t+1}) - \hL(\hB_{t}^{tr}; \vtheta_{t+1})) (\bar{\vg}_{t}^{vl} - \bar{\vg}_{t}^{tr}) $; \label{step:sam-b}
    \State  $\vH = \operatorname{diag}([\hat{\vg}_{t}^{tr}]^2)$; \label{step: hessian}
    \State {\color{darkred}if LETS-SAM: $\vg_b = \vH \frac{\bar{\vg}_{t}^{tr}}{\|\bar{\vg}_{t}^{tr}\|}$;}
    \label{step:rho-a}
    \State {\color{darkblue}if LETS-ASAM: 
        $\vg_b = \vH \frac{\vT_{\vtheta_{t}}^2 \bar{\vg}_{t}^{tr}}{\|\vT_{\vtheta_{t}}\bar{\vg}_{t}^{tr}\|}$;} \label{step:asam-rho-a}
    \State $g_{\rho} = - \vg_a^\top \vg_b g_{\rho}$; \label{step:rho-b}
    \State $\rho_{t+1} = \rho_t - \beta \eta g_\rho$; \label{step:rho-c}
    \EndFor  \\
    \Return $\vtheta_T$.
\end{algorithmic}
\end{algorithm}

\subsection{LETS-ASAM}
\label{sec:LETS-sam}

As the proposed LETS method is very general and can be integrated into any variant of SAM,
we show an example by combining LETS with the recent state-of-the-art method ASAM~\cite{kwon2021asam}.
The combined algorithm called LETS-ASAM is shown in Algorithm \ref{alg:bsam}.

ASAM defines an
adaptive sharpness of the loss
function, whose maximization region is determined by the normalization operator.
Then, the objective function of ASAM is formulated as
\begin{align}
\vtheta^\star (\rho)\equiv\argmin_\vtheta \max_{\|\vT_{\vtheta}^{-1}\vepsilon \| \leq \rho} \hL(\hD^{tr};\vtheta+\vepsilon),
\label{eq:asam}
\end{align}
where $\vT_{\vtheta}^{-1}$
is a normalization operator at $\vtheta$.
For example, $\vT_{\vtheta}$ is defined as $\vT_{\vtheta} = \diag(|\vtheta|)+ \xi\vI$
for fully-connected layers, where
$\xi$ is a positive hyperparameter, and for convolutional layers, $\vT_{\vtheta}$ is defined as
\begin{align}
\vT_\vtheta = \operatorname{diag}\left(\left[\|\vc_1\|\mathbf{1}_{d_1}, ..., \|\vc_{k}\|\mathbf{1}_{d_k}, |\tilde{\vtheta}|\right]\right) + \xi\vI,
\label{T_thetax}
\end{align}
where $\vtheta = [\vc_1, \dots, \vc_k, \tilde{\vtheta}]$, $\vc_i$ is the flattened weight vector of the $i$th convolution filter with its dimension as $d_i$, and
$\tilde{\vtheta}$ denote parameters that are not contained in convolution filters.
To integrate LETS into ASAM,
we replace the lower-level problem \eqref{eq:lower-level} 
with problem \eqref{eq:asam}.
At iteration $t$,
the update rule at the lower level (i.e., steps \ref{step:asam-1} and \ref{step:sam-2}) becomes 
\begin{align}
\vtheta_{t+1} =&\ \vtheta_{t} - \eta \nabla \hL\left(\hD^{tr};\vtheta_{t}+\rho_{t}\hat{\vepsilon}^{\text{(ASAM)}}_t\right),
\end{align} 
where $\hat{\vepsilon}^{\text{(ASAM)}}_t \equiv \frac{\vT_{\vtheta_{t}}^2 \nabla \hL(\hD^{tr};\vtheta_{t})}{\|\vT_{\vtheta_{t}}\nabla \hL(\hD^{tr};\vtheta_{t})\|}$,
while the update rule at the upper level (i.e., steps \ref{step:asam-rho-a} and \ref{step:rho-b}) is 
\begin{align*}
\!\!\!\rho_{t+1} \!\!=\!\! \rho_{t} \!+\! \frac{\beta \eta}{2} \nabla_{\vtheta_{t+1}}^\top  \!\!\left(\hL(\hD^{vl}\!;\!\vtheta_{t+1}) \!-\! \hL(\hD^{tr}\!;\!\vtheta_{t+1})\right)^2 \!
\nabla^2 \!\hL\!\left(\!\!\hD^{tr}\!;\!\vtheta_t\!+\!\rho_{t}\hat{\vepsilon}^{\text{(ASAM)}}_t\!\!\right) \!\!\hat{\vepsilon}^{\text{(ASAM)}}_t, \!\!\! 
\end{align*}
where the Hessian can be approximately computed by using the first-order derivative as in Eq. \eqref{eq:hessian-approx}.

\section{Experiments}

In this section, 
we first compare the proposed LETS-SAM and LETS-ASAM methods with state-of-the-art SAM-based methods on computer vision tasks (e.g. \textit{CIFAR-10}, \textit{CIFAR-100}, and \textit{ImageNet}) and natural language processing (e.g., \textit{GLUE} and \textit{IWSLT’14 DE-EN}) tasks on various architectures.
Next, we evaluate the robustness of LETS-SAM and LETS-ASAM to label noise.
Furthermore, we conduct experiments to study the robustness of LETS to the initialization of $\rho$ (i.e., $\rho_0$) and the effects of different generalization metrics. 
To further illustrate the superior performance of LETS, we visualize the loss landscapes of models learned by the LETS methods. 
Finally, we empirically study the convergence of the proposed LETS method.

\textbf{Baselines.}
The proposed methods are compared with ERM, SAM~\cite{foret2021sharpness}, ESAM~\cite{du2021efficient}, RST~\cite{zhao2022ss}, LookSAM~\cite{liu2022towards}, AE-SAM~\cite{jiang2023adaptive}, AE-LookSAM~\cite{jiang2023adaptive}, ASAM~\cite{kwon2021asam}, and GSAM~\cite{zhuang2022surrogate}. ESAM selects some of the training samples to update the model and uses a subset of parameters to compute the perturbation. RST switches between SAM and ERM randomly for each iteration according to a Bernoulli trial with a probability 0.5. LookSAM uses SAM for every five steps. AE-SAM and AE-LookSAM use SAM adaptively. ASAM improves SAM by using an adaptive sharpness measure while GSAM improves SAM by minimizing a surrogate gap.
We use official implementations of those baselines.
      
\subsection{\textit{CIFAR-10} and \textit{CIFAR-100}}
	\label{sec:cifar}

\textbf{Setups.}
Experiments are conducted on the \textit{CIFAR-10} and \textit{CIFAR-100} datasets~\cite{krizhevsky2009learning}, 
each of which contains 50,000 images for training and 10,000 for testing.
We use four network architectures: \textit{ResNet-18}~\cite{he2016deep}, \textit{WideResNet-28-10}~\cite{zagoruyko2016wide}, \textit{PyramidNet-110}~\cite{han2017deep}, and \textit{ViT-S16}~\cite{dosovitskiy2021an}.
Following experimental setups in \cite{foret2021sharpness,kwon2021asam,jiang2023adaptive},
the batch size is set to $128$, and the SGD optimizer with momentum $0.9$ and weight decay $0.0005$ is used. In the SGD optimizer, for updating model parameters, an initial learning rate $0.1$ with the cosine learning rate scheduler is adopted, while we use an initial learning rate of $0.0001$ with an exponential learning rate scheduler to update $\rho$.
We train \textit{PyramidNet-100} for 300 epochs, \textit{ViT-S16} for 1200 epochs, and train \textit{ResNet-18} and \textit{WideResNet-28-10} for 200 epochs. 
As the \textit{CIFAR} datasets do not have a held-out validation set, 
following the practice introduced in 
\cite{liu2022auto_lambda},
mini-batches of validation data are randomly sampled from the training set. 
To ensure $\rho$ is positive, 
an exponential transformation $\exp(\cdot)$ is applied to $\rho$, i.e., $\rho=\exp(\nu)$, where $\nu$ is an unconstrained variable to be learned.
Experiments are repeated over three random seeds.
All implementation details are summarized in Appendix E.
	
\begin{table}[!t]
    \centering
    \caption{Classification accuracy (\%) on \textit{CIFAR-10} using various architectures. 
    The better result in each comparison
    group is \underline{underlined} and the best result across all the groups is in
    \textbf{bold}.
    }
    \begin{NiceTabular}{c c c c c}
    \CodeBefore  
    \rectanglecolor{Gray}{10-1}{10-5}
    \rectanglecolor{Gray}{12-1}{12-5}
    \Body
        \toprule
         & \textit{ResNet-18} & \textit{WideResNet-28-10} & \textit{PyramidNet-110} & \textit{ViT-S16} \\
        \arrayrulecolor{black!50}\specialrule{1.5pt}{.3\jot}{0.3pc}
        ERM & $95.41 \pm 0.03$  &  $96.34 \pm 0.12$ & $96.62 \pm 0.10$ &  $86.69 \pm 0.11$\\
        ESAM & $96.56 \pm 0.08$ & $97.29 \pm 0.11$ & $97.81 \pm 0.10$ & $84.27 \pm 0.11$\\
        RST & $96.40 \pm 0.16$ & $97.09 \pm 0.11$ & $97.22 \pm 0.10$ & $87.38 \pm 0.14$\\
        AE-SAM   & $96.63 \pm 0.04$ & $97.30 \pm 0.10$ & $97.90 \pm 0.09$ & $87.77 \pm 0.13$\\
        LookSAM & $96.32 \pm 0.12$ & $97.02 \pm 0.12$ & $97.10 \pm 0.11$ & $87.12 \pm 0.20$\\
        AE-LookSAM & $96.56 \pm 0.21$ & $97.15 \pm 0.08$ & $97.22 \pm 0.11$ & $87.32 \pm 0.11$\\
        GSAM & $96.61 \pm 0.05$ & $97.39 \pm 0.08$ & $97.65 \pm 0.05$ & $88.33 \pm 0.41$\\
        \cmidrule{1-5} 
        SAM & $96.52 \pm 0.12$ & $97.27 \pm 0.11$  & $97.30 \pm 0.10$ & $87.37 \pm 0.09$\\
        LETS-SAM & $\mathbf{\underline{96.81}}\pm 0.02$ & $\underline{97.49} \pm 0.08 $  & $\underline{97.79} \pm 0.06$ & $\underline{88.83} \pm 0.17$\\
        \cmidrule{1-5}
        ASAM & $96.57 \pm 0.02$ & $97.28 \pm 0.07$  & $97.58 \pm 0.06$  & $90.35 \pm 0.05$\\
        LETS-ASAM & $\underline{96.77} \pm 0.01$ & $\mathbf{\underline{97.54}} \pm 0.08$ & $\mathbf{\underline{97.91}} \pm 0.01$ & $\mathbf{\underline{90.75} \pm 0.37}$\\
        \arrayrulecolor{black!50}\specialrule{1.5pt}{.3\jot}{0.3pc}
    \end{NiceTabular}
    \label{table:result-cifar10} 
\end{table}

\begin{table}[!ht]
    \centering
    \caption{Classification accuracy (\%) on \textit{CIFAR-100} using various architectures. 
    The better result in each comparison
    group is \underline{underlined} and the best result across all the groups is in
    \textbf{bold}.
    }
    \begin{NiceTabular}{c c c c c}
    \CodeBefore  
    \rectanglecolor{Gray}{10-1}{10-5}
    \rectanglecolor{Gray}{12-1}{12-5}
    \Body
        \toprule
         & \textit{ResNet-18} & \textit{WideResNet-28-10} & \textit{PyramidNet-110} & \textit{ViT-S16} \\
        \arrayrulecolor{black!50}\specialrule{1.5pt}{.3\jot}{0.3pc}
        ERM  & $78.17 \pm 0.05$ & $81.56 \pm 0.14$ & $81.89 \pm 0.15$ & $62.42 \pm 0.22$\\
         ESAM & $80.41 \pm 0.10$ & $84.51 \pm 0.02$ & $85.39 \pm 0.05$ & $62.11 \pm 0.15$\\
         RST  & $80.10\pm0.16$ & $82.89 \pm 0.02$ & $84.90 \pm 0.05$ &  $63.18 \pm 0.19$\\
         AE-SAM & $80.48 \pm 0.11$  & $84.51 \pm 0.11$ & $85.58\pm 0.10$ & $63.68 \pm 0.23$\\
         LookSAM & $79.89 \pm 0.29$ & $83.70\pm0.12$ & $84.01 \pm 0.06$ & $63.52 \pm 0.19$\\
         AE-LookSAM & $80.29 \pm 0.37$ & $83.92\pm0.07$ & $84.80 \pm 0.13$ & $64.16 \pm 0.23$\\
         GSAM  & $80.27 \pm 0.33$ & $83.80 \pm 0.08$ & $84.91 \pm 0.29$ & $63.21 \pm 0.38$\\
        \cmidrule{1-5} 
         SAM  & $80.17 \pm 0.15$ & $83.42 \pm 0.05$ & $84.46 \pm 0.05$ & $63.23 \pm 0.25$\\
         LETS-SAM  & $\underline{80.71} \pm 0.07$ & $\mathbf{\underline{84.78}} \pm 0.27$ & $\mathbf{\underline{85.86} \pm 0.23}$ & $\underline{64.66} \pm 0.46$\\
        \cmidrule{1-5}
         ASAM  & $80.66 \pm 0.16$ & $83.68 \pm 0.12$ & $85.13 \pm 0.12$  & $66.44 \pm 0.26$\\
        LETS-ASAM & $\mathbf{\underline{81.42}} \pm 0.07$ & $\underline{84.73} \pm 0.05$ & $\underline{85.47} \pm 0.10$ & $\mathbf{\underline{66.64} \pm 0.43}$\\
        \arrayrulecolor{black!50}\specialrule{1.5pt}{.3\jot}{0.3pc}
    \end{NiceTabular}
    \label{table:result-cifar100} 
\end{table}
	
\textbf{Results.}
The experimental results on \textit{CIFAR-10} and \textit{CIFAR-100} are shown in Table \ref{table:result-cifar10} and \ref{table:result-cifar100}, respectively.
We can find that, 
by learning the perturbation radius, 
LETS-SAM performs better than SAM on
all the four architectures.
Compared with ASAM, 
LETS-ASAM is also better,
demonstrating the effectiveness of the proposed LETS method.
Furthermore, on \textit{CIFAR-100},
LETS-ASAM achieves the highest accuracy on \textit{ResNet-18} and \textit{ViT-S16},
while LETS-SAM is the best on 
\textit{WideResNet-28-10} and \textit{PyramidNet-110}. On \textit{CIFAR-10},  LETS-SAM achieves the highest accuracy on \textit{ResNet-18},
while LETS-ASAM outperforms all the baseline models on 
\textit{WideResNet-28-10}, \textit{PyramidNet-110}, and \textit{ViT-S16}.
    
Figure \ref{fig:generalization-gap-cifar100} (resp. Figure 7
in Appendix D.3)
shows the generalization gap (i.e., $\hL(\hD^{ts}; \vtheta_t) - \hL(\hD^{tr}; \vtheta_t)$) w.r.t. training epochs on \textit{CIFAR-100} (resp. \textit{CIFAR-10}) dataset. 
As shown,
LETS-SAM (resp. LETS-ASAM) has a smaller generalization gap than SAM (resp. ASAM) when the training process nearly converges,
verifying that learning the perturbation radius can reduce the generalization gap.

\begin{figure}[!t]
    \centering
    \subfigure[\textit{ResNet-18}. \label{fig:cifar-100-resnet10}]{\includegraphics[width=0.3\textwidth]{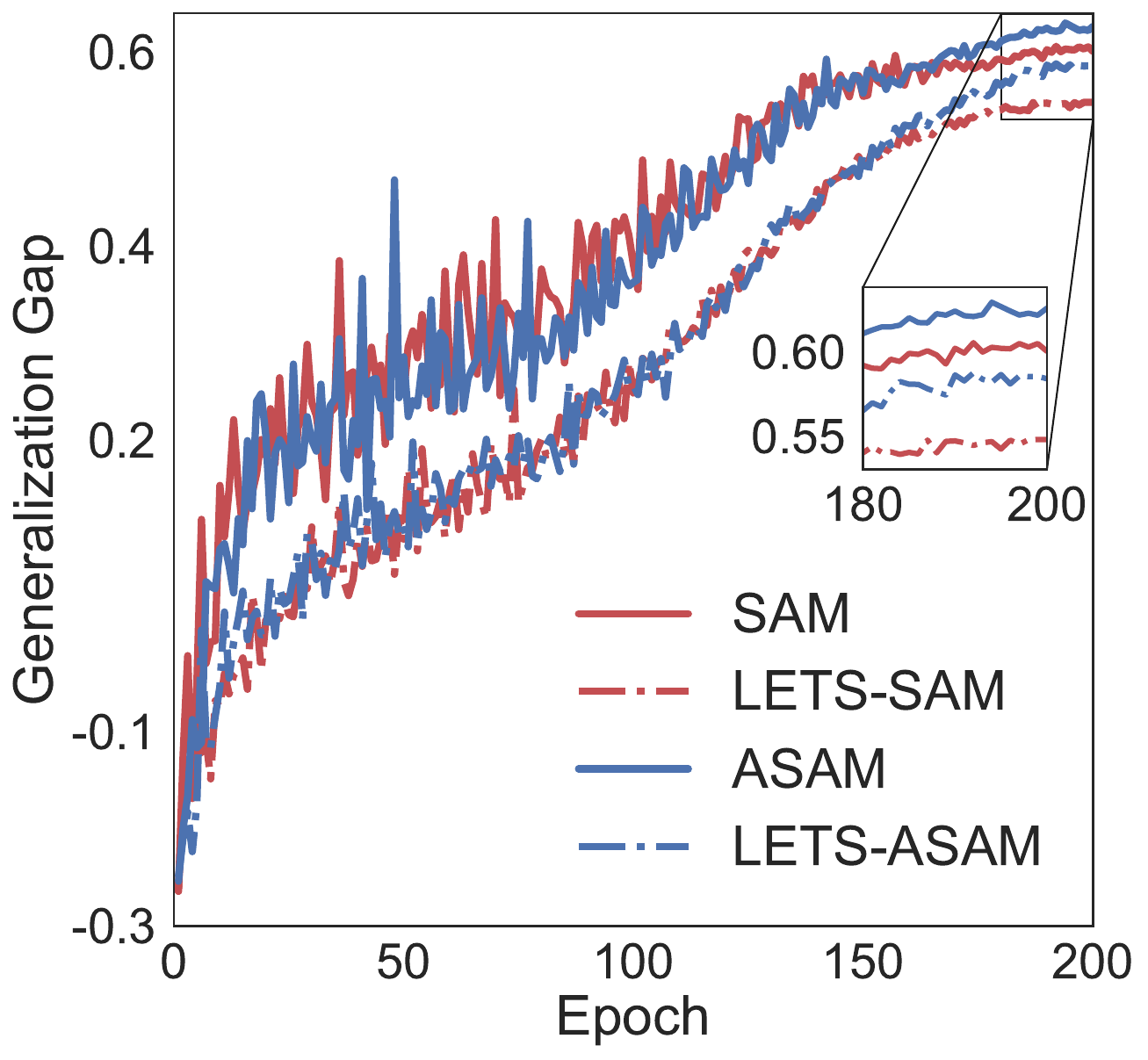}}
    \quad
    \subfigure[\!\textit{WideResNet-28-10}. \label{fig:cifar-100-wrn}]{\includegraphics[width=0.3\textwidth]{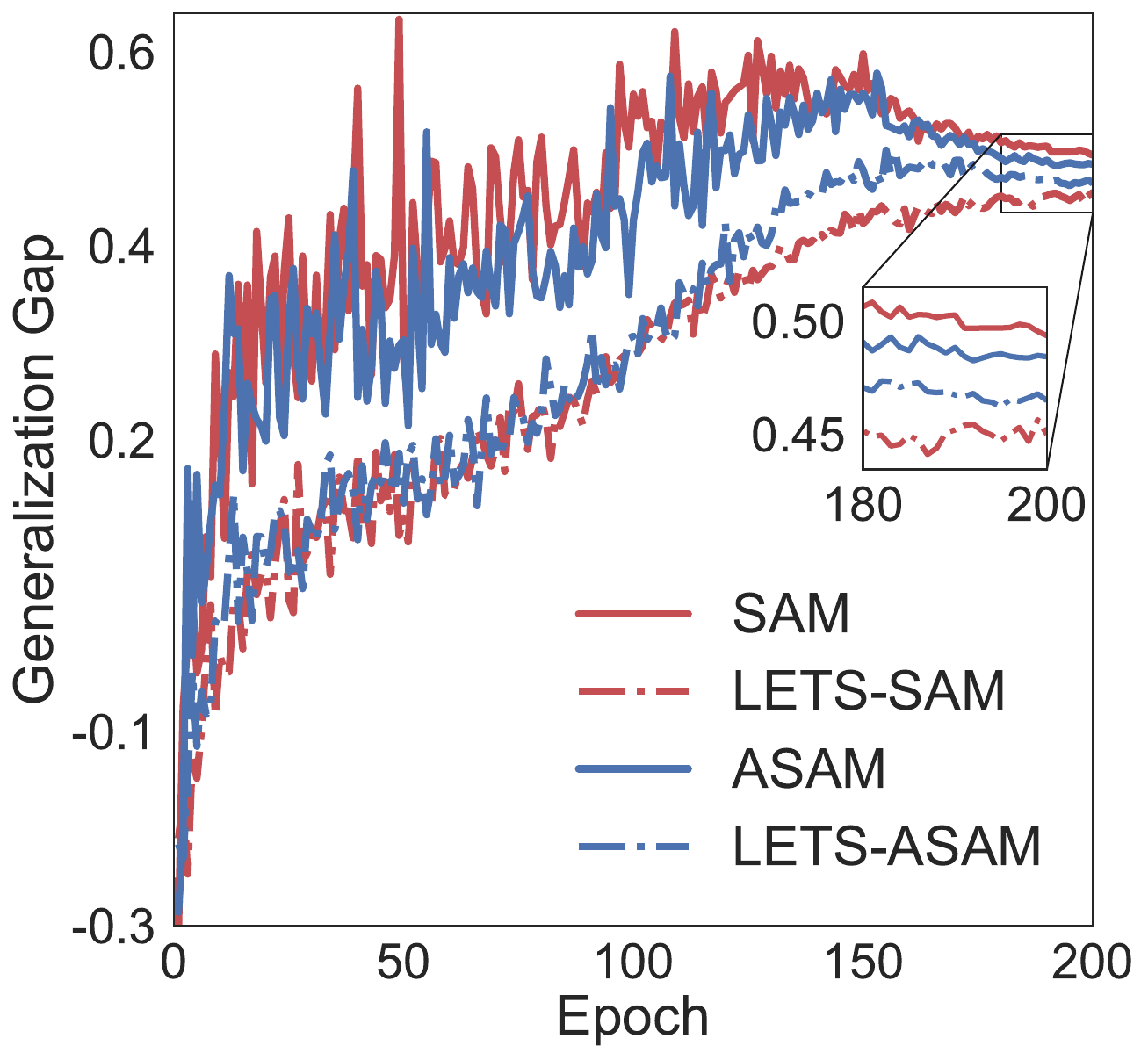}}
    \quad
    \subfigure[\textit{PyramidNet-110}. \label{fig:cifar-100-pym}]{\includegraphics[width=0.3\textwidth]{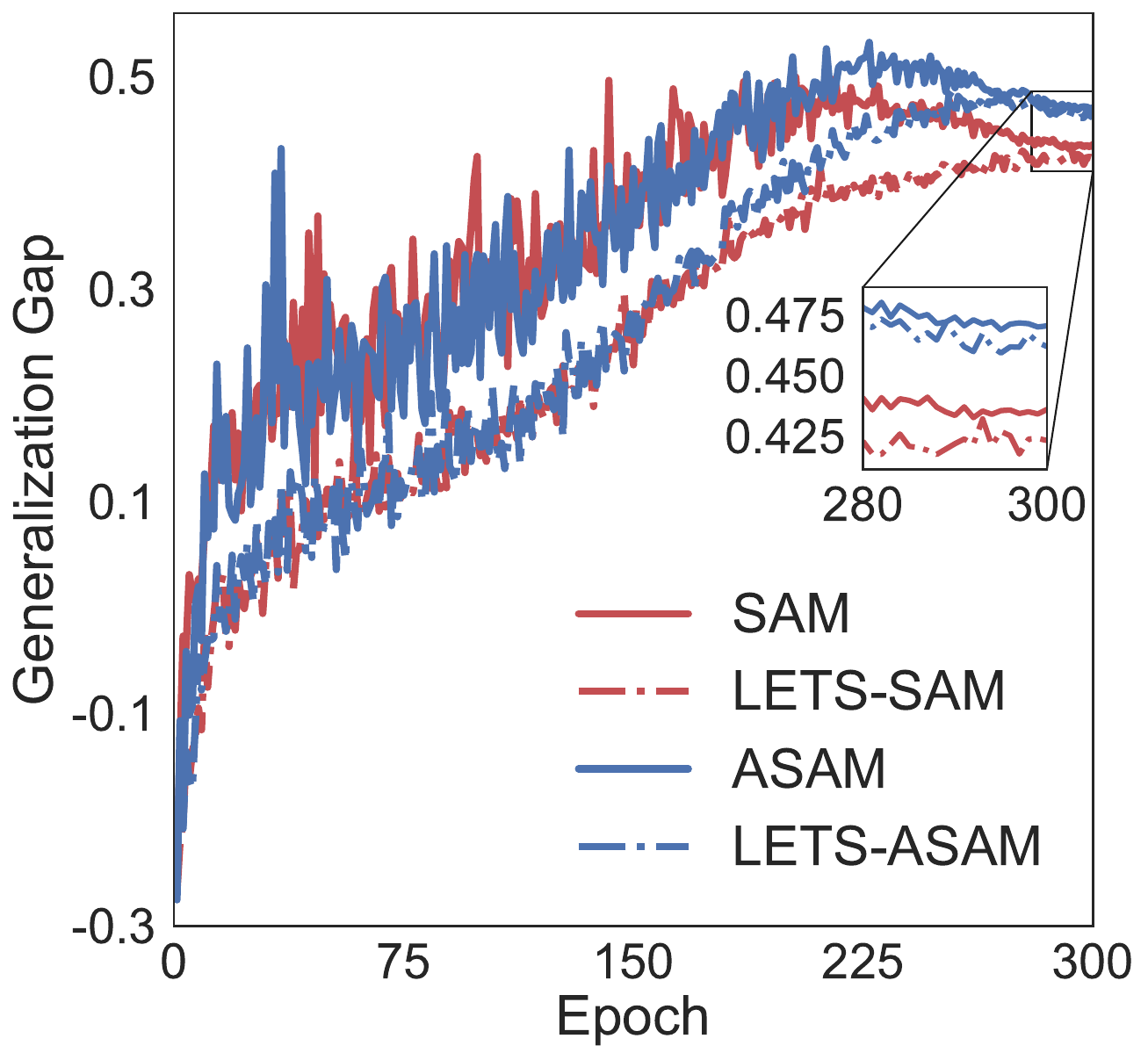}}
    \!\!\!
    \caption{Generalization gap
        \textit{w.r.t.} training epochs
        on \textit{CIFAR-100}. 
        Best viewed in color.
    }
    \label{fig:generalization-gap-cifar100}
\end{figure}

\subsection{\textit{ImageNet}}

\textbf{Setups.}
In this section, we conduct experiments on the \textit{ImageNet} dataset~\cite{russakovsky2015imagenet}, which contains $1,281,167$ images for training and $32,702$ images for testing, by using \textit{ResNet-50}~\cite{he2016deep}. 
Following the experimental setup in \cite{du2021efficient,jiang2023adaptive},
we use a batch size of $512$, SGD optimizer with momentum $0.9$, weight decay $0.0001$, an initial learning rate $0.1$ with the cosine learning rate scheduler for model parameters, and an initial learning rate $0.0001$ with the exponential learning rate scheduler for $\rho$.
The number of training epochs is $90$. Mini-batches of validation data are randomly sampled from the training set as in Section \ref{sec:cifar}.
Experiments are repeated over three random seeds.

\textbf{Results.}
The experimental results on \textit{ImageNet} are shown in Table \ref{tab:ImageNet}.
We can find that
LETS-SAM performs better than all the baseline methods.
Compared with ASAM, LETS-ASAM achieves a higher accuracy, demonstrating the effectiveness of the proposed LETS.
	
\begin{table}[!t]
    \centering
    \caption{Classification accuracy (\%) on the \textit{ImageNet} dataset. 
        The better result in each comparison
    group is \underline{underlined} and the best result across all the groups is in
    \textbf{bold}.}
    \begin{tabular}{cc}
        \toprule
        ERM &  $77.11 \pm 0.14$\\
        ESAM & $77.25\pm 0.75$ \\
        RST  & $77.38\pm 0.06$\\
        AE-SAM & $77.43\pm 0.06$\\
        LookSAM & $77.13\pm 0.09$\\
        AE-LookSAM & $77.29\pm 0.08$\\
        GSAM & $77.20 \pm 0.00$\\
        \midrule
        SAM &  $77.47 \pm 0.12$\\ \rowcolor{Gray}
        LETS-SAM & $\mathbf{\underline{77.67}} \pm0.11$\\
        \midrule
        ASAM & $77.17\pm0.15$\\ \rowcolor{Gray}
        LETS-ASAM & $\underline{77.61} \pm 0.10$\\
        \bottomrule
    \end{tabular}
    \label{tab:ImageNet}
\end{table}

 \subsection{\textit{IWSLT’14 DE-EN}}

\begin{figure}
\centering
\includegraphics[width=0.5\textwidth]{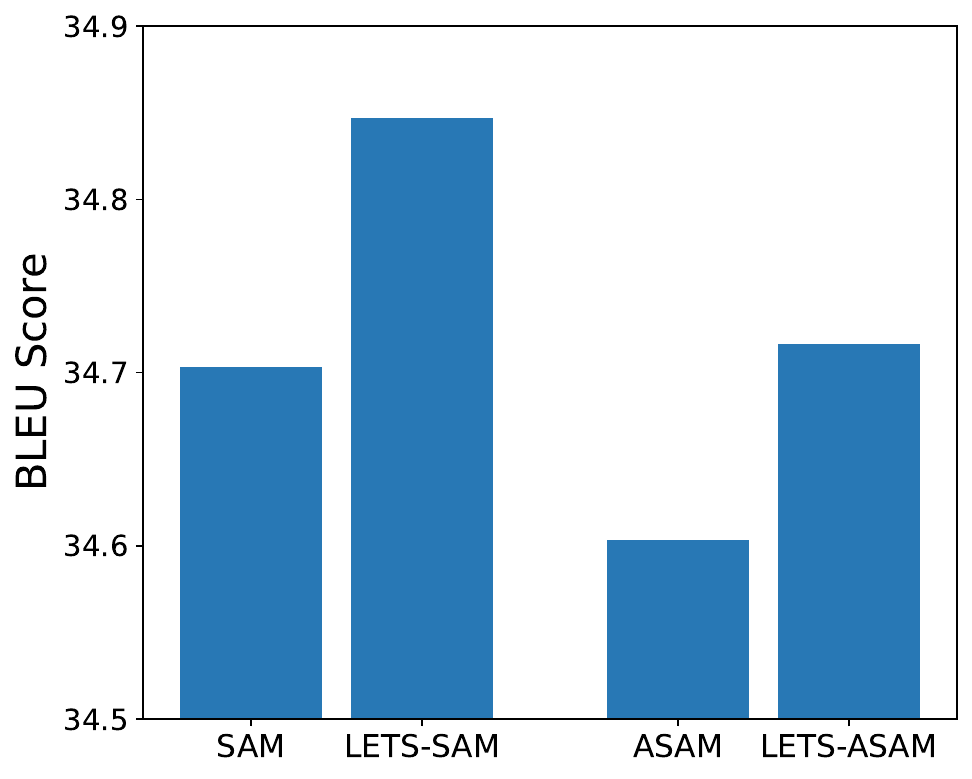}
\caption{Experimental results on \textit{IWSLT’14 DE-EN}.}
\label{fig:iwslt}
\end{figure}

\textbf{Setups.}
In this section, we conduct experiments on the \textit{IWSLT’14 DE-EN} dataset, which is a widely used dataset for machine translation. 
Following experimental setups in \cite{kwon2021asam}, we use the widely used machine translation architecture: Transformer architecture~\cite{vaswani2017attention}. We use the Adam optimizer with $(\beta_1,\beta_2)=(0.9,0.98)$ and weight decay 0.0001, initial learning rate 0.0005 for model parameters, initial learning rate 0.0001 with the exponential learning rate scheduler for $\rho$, and a dropout rate 0.3. Label smoothing is adopted with a factor of 0.1. The number of training epochs is $50$. Mini-batches of validation data are randomly sampled from the training set as in Section \ref{sec:cifar}. Following \cite{kwon2021asam}, 
we use the BLEU score as the evaluation metric (higher is better). Experiments are repeated over three random seeds.

\textbf{Results.}
Experimental results on the \textit{IWSLT'14 DE-EN} dataset are shown in Figure \ref{fig:iwslt}.
We can find that
LETS-SAM performs better than SAM and achieves the best performance, while 
LETS-ASAM also outperforms ASAM,
demonstrating the effectiveness of the proposed LETS method.

\subsection{\textit{GLUE}}
\label{sec:glue}

\textbf{Setups.} In this section, we conduct experiments on the \textit{GLUE} benchmark~\cite{wang2018glue}, which has various corpora and natural language understanding (NLU) tasks.
Each task has respective corpora and metrics. The details of the \textit{GLUE} benchmark are summarized in Appendix C.
We fine-tune the pre-trained checkpoint of the DeBERTa-base model on the \textit{GLUE} benchmark. Following the experimental setups in \cite{he2020deberta}, we use Adam optimizer with $\epsilon=10^{-6}$ and $(\beta_1,\beta_2)=(0.9, 0.999)$, linear learning rate scheduler with warmup steps and gradient clipping 1.0. Mini-batches of validation data are randomly sampled from the training set as in Section \ref{sec:cifar}. Experiments are repeated over three random seeds.

\textbf{Results.}
The experimental results on five NLU tasks of \textit{GLUE} are shown in Figure \ref{fig:glue}.
We can find that LETS-SAM performs better than SAM as shown in Figure \ref{fig:glue-sam}. Compared with ASAM, LETS-ASAM achieves better performance shown in Figure \ref{fig:glue-asam}, demonstrating the effectiveness of the proposed LETS method.
Due to page limit, the overall results on the \textit{GLUE} benchmark are reported in Table 9
of Appendix D,
which shows the superiority of the proposed LETS method.

\begin{figure}[!t]
    \centering
    \!\!\!
    \subfigure[SAM and LETS-SAM. \label{fig:glue-sam}]{\includegraphics[width=0.49\textwidth]{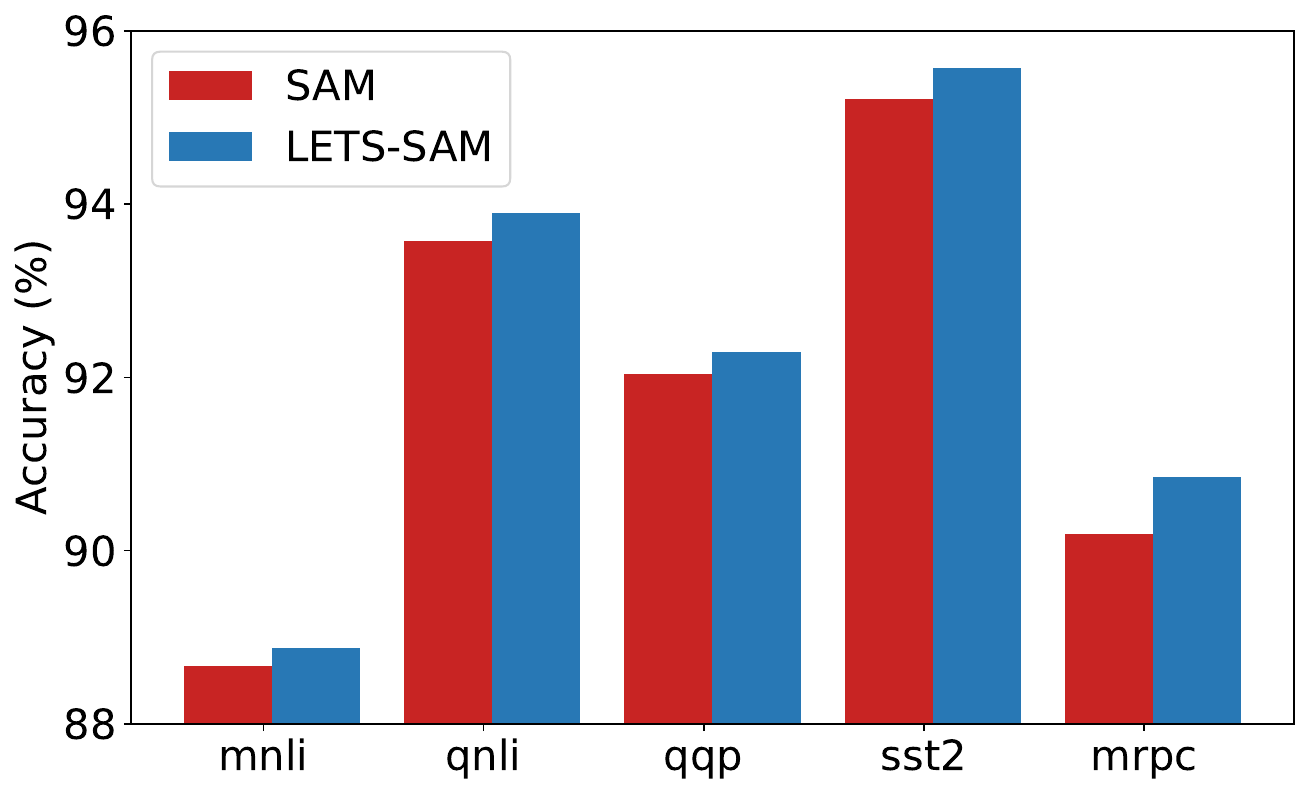}}
    \subfigure[ASAM and LETS-ASAM. \label{fig:glue-asam}]{\includegraphics[width=0.49\textwidth]{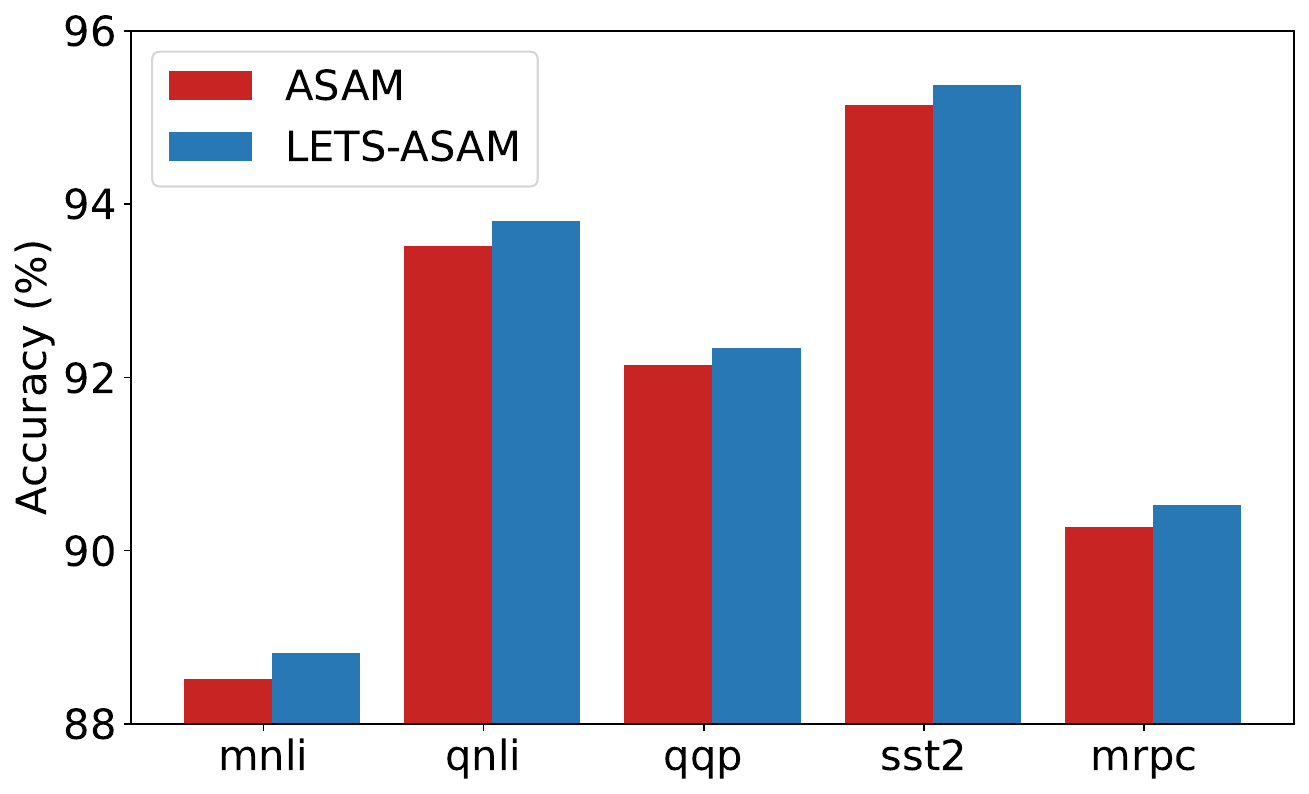}}
    \!\!\!
    \caption{Testing accuracy on five datasets from \textit{GLUE}.
    }
    \label{fig:glue}
\end{figure}

\subsection{Robustness to Label Noise}
	
\textbf{Setups.}
SAM has shown to be robust to label noise in training data \cite{foret2021sharpness}. 
In this section, we follow the experimental setups in \cite{foret2021sharpness,jiang2023adaptive} to study whether learning the perturbation radius can enhance the robustness of SAM.
The \textit{ResNet-18} and \textit{ResNet-32} are used.
We train the model on a corrupted version of the \textit{CIFAR-10} dataset (with noise levels of $20\%$, $40\%$, $60\%$, and $80\%$),
where the labels of some training data are flipped randomly while the testing set is kept clean.
We use batch size $128$,
SGD optimizer with momentum $0.9$ and weight decay $0.0005$,
initial learning rate $0.1$ with the cosine learning rate scheduler
for model parameters, and initial learning rate $0.0001$ with the exponential learning rate scheduler for $\rho$.
Mini-batches of validation data are randomly sampled from the training set as in Section \ref{sec:cifar}.
The number of training epochs is set to $200$.
Each experiment is repeated over three random seeds.

\textbf{Results.}
The results on \textit{ResNet-18} and \textit{ResNet-32} are shown in Table \ref{tab:noise_label_with_std}.
We can find that
LETS-SAM performs the best in all the noise levels.
Moreover, LETS-ASAM outperforms ASAM by a large margin.
Those results confirm that LETS is an effective method to improve the robustness of SAM and ASAM.

 \begin{table}[!tb]
    \centering
    \caption{Classification accuracy (\%) on \textit{CIFAR-10} for \textit{ResNet-18} and \textit{ResNet-32} trained with different levels of label noises. 
        The better result in each comparison
    group is \underline{underlined} and the best result across all the groups is in
    \textbf{bold}. }
        \begin{NiceTabular}{cccccc}
         \CodeBefore  
        \rectanglecolor{Gray}{10-2}{10-6}
        \rectanglecolor{Gray}{12-2}{12-6}
        \rectanglecolor{Gray}{21-2}{21-6}
        \rectanglecolor{Gray}{23-2}{23-6}
        \Body
            \toprule
            & & noise = 20\% & noise = 40\% & noise = 60\% & noise = 80\% \\
            \arrayrulecolor{black!50}\specialrule{1.5pt}{.3\jot}{0.3pc}
          \multirow{11}{*}{\STAB{\rotatebox[origin=c]{90}{\textit{ResNet-18}}}} 
            & ERM & $87.92 \pm 0.02$ & $70.82 \pm 0.33$ & $49.61 \pm 0.39$ & $28.23 \pm 0.40$\\
            & ESAM & $94.19 \pm 0.10$ & $91.46 \pm 0.49$ & $81.30 \pm 0.69$ & $15.00 \pm 4.89$\\
            & RST & $90.62 \pm 0.37$ & $77.84 \pm 0.56$ & $61.18 \pm 0.87$ & $47.32 \pm 1.50$\\
            & AE-SAM & $92.84 \pm 0.25$ & $84.17 \pm 0.53$ & $73.54 \pm 0.50$ & $65.00 \pm 2.25$\\
            & LookSAM & $92.72 \pm 0.18$ & $88.04 \pm 0.40$ & $72.26 \pm 1.75$ & $69.72 \pm 1.52$\\
            & AE-LookSAM & $94.34 \pm 0.29$ & $91.58 \pm 0.54$ & $87.85 \pm 0.23$ & $76.90 \pm 0.32$\\
            & GSAM & $91.72 \pm 0.15$ & $87.88 \pm 0.50$ & $83.29 \pm 0.25$ & $73.16 \pm 1.65$  \\
            \cmidrule{2-6} 
            & SAM & $94.80 \pm 0.05$ & $91.50 \pm 0.22$ & $88.15 \pm 0.23$ & $77.40 \pm 0.21$\\
            & LETS-SAM & $\mathbf{\underline{95.65}} \pm 0.09$ &  $\mathbf{\underline{93.84}} \pm 0.19$ & $\mathbf{\underline{89.48}} \pm 0.31$ & $\mathbf{\underline{77.89}} \pm 0.80$\\
            \cmidrule{2-6} 
            & ASAM & $91.47 \pm 0.21$ & $88.28 \pm 0.16$ & $83.22 \pm 0.38$ & $71.77 \pm 1.41$\\
            & LETS-ASAM & $\underline{92.77} \pm 0.18$ & $\underline{89.72} \pm 0.20$ & $\underline{84.94} \pm 0.16$ & $\underline{75.00} \pm 0.56$ \\
            \arrayrulecolor{black!50}\specialrule{1.5pt}{.3\jot}{0.3pc}
          \multirow{11}{*}{\STAB{\rotatebox[origin=c]{90}{\textit{ResNet-32}}}} 
            & ERM & $87.43 \pm 0.00$ & $70.82 \pm 0.98$ & $46.26 \pm 0.18$ & $29.00 \pm 1.79$\\
            & ESAM & $93.42 \pm 0.50$ & $91.63 \pm 0.29$ & $82.73 \pm 1.21$ & $10.09 \pm 0.10$\\
            & RST & $89.63 \pm 0.26$ & $74.17 \pm 0.47$ & $58.40 \pm 2.95$ & $59.53 \pm 1.63$\\
            & AE-SAM & $92.87 \pm 0.17$ & $82.85 \pm 2.16$ & $71.50 \pm 0.74$ & $65.43 \pm 3.19$\\
            & LookSAM & $92.49 \pm 0.05$ & $86.56 \pm 0.92$ & $63.35 \pm 0.48$ & $68.01 \pm 5.37$\\
            & AE-LookSAM & $94.70 \pm 0.10$ & $91.80 \pm 0.87$ & $88.22 \pm 0.27$ & $77.03 \pm 0.16$\\
            & GSAM & $92.07 \pm 0.13$ & $80.61 \pm 0.45$ & $84.08 \pm 0.47$ & $72.46 \pm 1.85$ \\
            \cmidrule{2-6} 
            & SAM & $95.08 \pm 0.23$ & $91.01 \pm 0.41$ & $88.90 \pm 0.39$ & $77.32 \pm 0.12$\\
            & LETS-SAM & $\mathbf{\underline{95.73}} \pm 0.10$ & $\mathbf{\underline{93.96}} \pm 0.05$ & $\mathbf{\underline{89.71}} \pm 0.17$ & $\mathbf{\underline{77.39}} \pm 0.19$ \\
            \cmidrule{2-6} 
            & ASAM & $91.61 \pm 0.26$ & $88.83 \pm 0.76$ & $83.61 \pm 0.33$ & $72.32 \pm 1.15$ \\
            & LETS-ASAM & $\underline{92.80} \pm 0.16$ & $\underline{89.91} \pm 0.41$ & $\underline{85.29} \pm 0.38$ & $\underline{75.55} \pm 1.06$\\
            \bottomrule
        \end{NiceTabular}
    \label{tab:noise_label_with_std}
\end{table}
	
\subsection{Robustness to the Initialization of $\rho$}
In this section, we conduct experiments on the \textit{CIFAR-10} and \textit{CIFAR-100} datasets using \textit{ResNet-18} to study the effect of the initialization of $\rho$ (i.e., $\rho_0$) to the performance of LETS-ASAM.
According to results shown in Table \ref{table:different_init_rho}, we can find that the performance of LETS-ASAM is not so sensitive to a wide range of $\rho_0 \in \{0.01, 0.05, 0.1, 0.5, 1, 1.5, 2\}$.
Hence, $\rho_0$ can be initialized more randomly without compromising the performance of LETS-ASAM, which could imply that learning the perturbation radius is more efficient and effective than using grid search to find the perturbation radius.

 \begin{table}[!t]
\centering
\caption{Classification accuracy (\%) of LETS-ASAM on \textit{CIFAR-10} and \textit{CIFAR-100} for different initializations of $\rho$.}
\begin{tabular}{ccc}
\toprule
$\rho_0$ &  {\textit{CIFAR-10}} & {\textit{CIFAR-100}}\\
\midrule
$0.01$ & $96.79 \pm 0.09$ & $81.37 \pm 0.18$\\
$0.05$ & $96.73 \pm 0.10$ & $81.62 \pm 0.14$\\
$0.1$ &  $96.78 \pm 0.08$ & $81.72 \pm 0.07$\\
$0.5$ & $96.74 \pm 0.03$ & $81.51 \pm 0.14$\\
$1$ & $96.77 \pm 0.01$ & $81.36 \pm 0.21$\\
$1.5$ & $96.79 \pm 0.03$ & $81.65 \pm 0.06$\\
$2$ & $96.79 \pm 0.04$ & $81.75 \pm 0.15$\\
\bottomrule
\end{tabular}
\label{table:different_init_rho}
\end{table}

\subsection{Loss Landscapes}

To illustrate the superior performance of the LETS method, we follow \cite{du2021efficient} to visualize the loss landscapes w.r.t. weight perturbations of SAM, LETS-SAM, ASAM, and LETS-ASAM.
Figure \ref{fig:landscape_resnet18_cifar10} (resp. Figure 8 
in Appendix D.4)
shows the corresponding loss landscapes for different methods built on \textit{ResNet-18} on the \textit{CIFAR-10} (resp. \textit{CIFAR-100}) dataset, respectively.
We can find that the model learned by LETS-SAM (resp. LETS-ASAM) has a flatter loss landscape than that of SAM (resp. ASAM). Since the flatness is a measure for generalization, those results could explain why using the proposed LETS method could lead to performance improvement.

\begin{figure}[!tb]
\centering
\!\!\!
\subfigure[SAM. \label{fig:cifar-10-resnet18-sam}]{\includegraphics[width=0.23\textwidth]{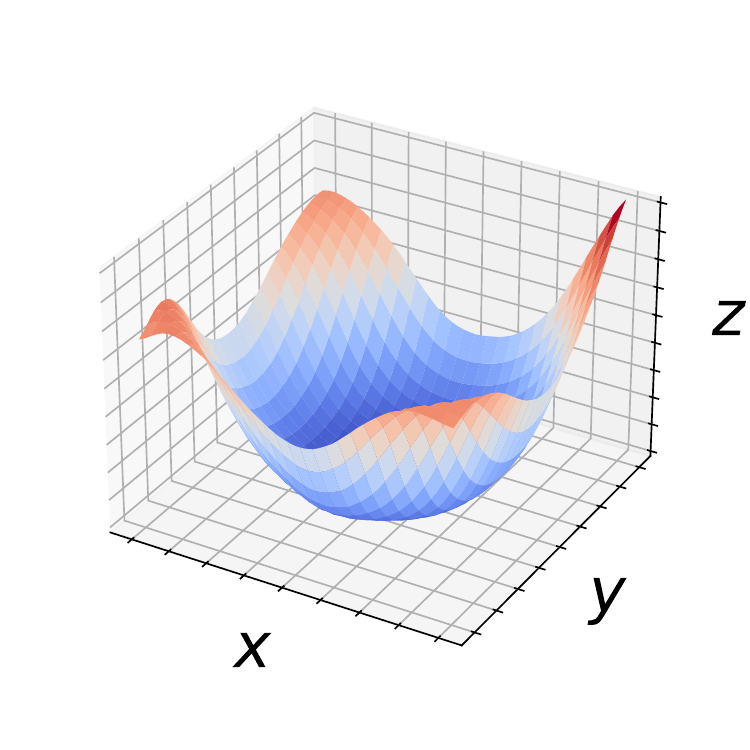}}
\!\!\!
\subfigure[LETS-SAM. \label{fig:cifar-10-resnet18-bsam}]{\includegraphics[width=0.23\textwidth]{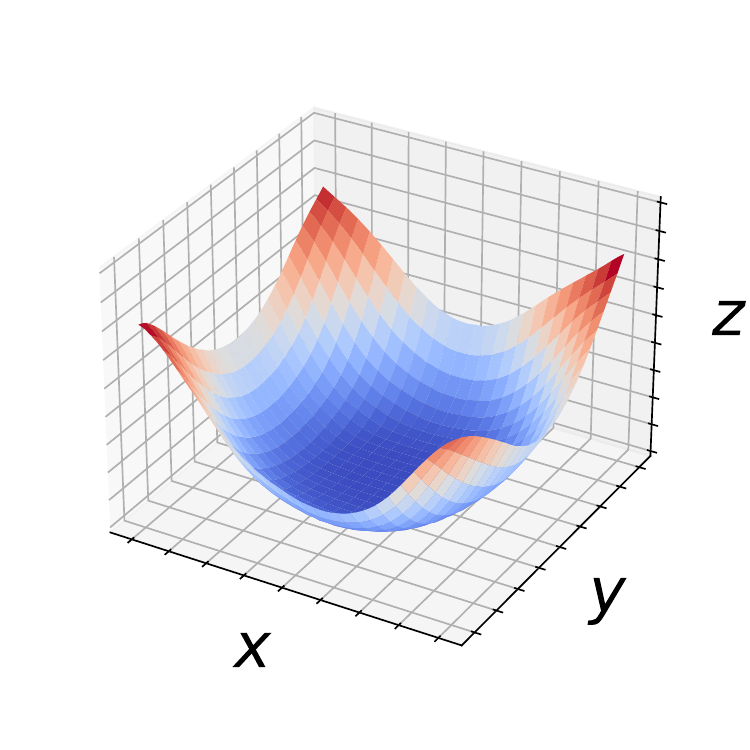}}
\!\!\!
\subfigure[ASAM. \label{fig:cifar-10-resnet18-asam}]{\includegraphics[width=0.23\textwidth]{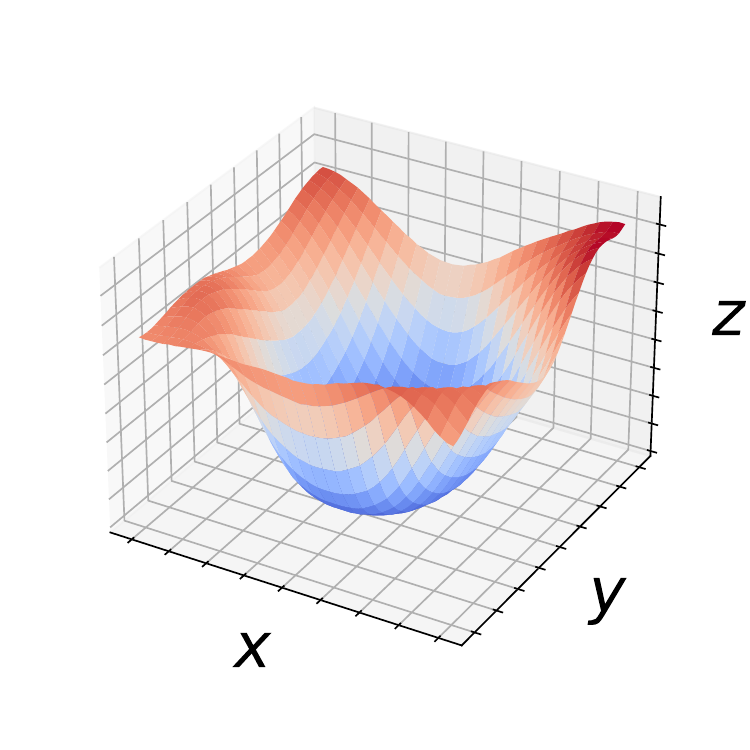}}
\!\!\!
\subfigure[LETS-ASAM. \label{fig:cifar-10-resnet18-basam}]{\includegraphics[width=0.23\textwidth]{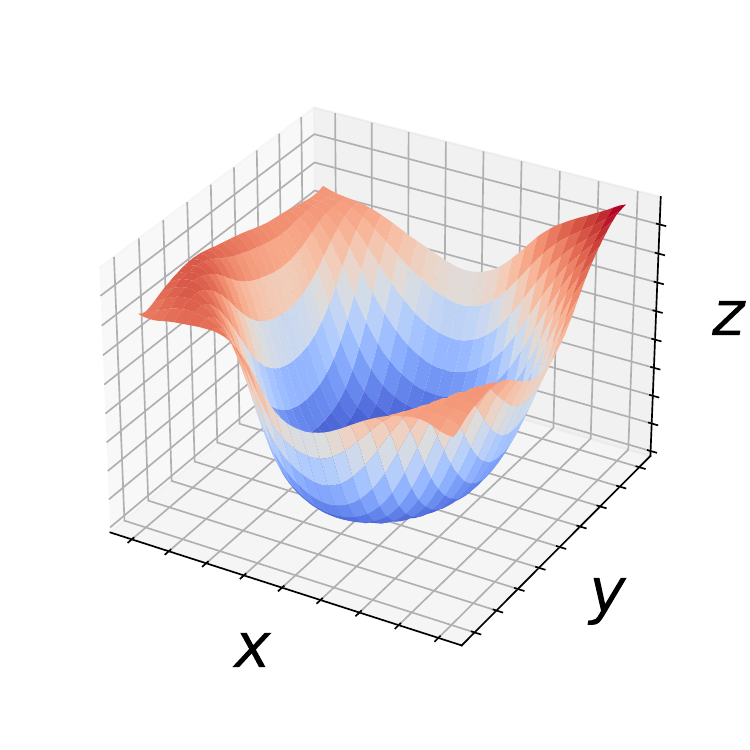}} \!\!\!
\caption{Loss landscapes of different methods built on \textit{ResNet-18} for \textit{CIFAR-10}, where x- and y-axes represent two orthogonal weight perturbations, while z-axis represents the loss value.}
\label{fig:landscape_resnet18_cifar10}
\end{figure}

\subsection{Effects of Generalization Metrics}
\label{sec:abl}

In this section, 
we conduct experiments on the \textit{CIFAR-10} and \textit{CIFAR-100} datasets using \textit{ResNet-18}
to analyze the effects of different generalization metrics (in upper-level problem \eqref{eq:upper-level}),
including validation loss, 
the generalization gap, and its square.
According to results shown in Table \ref{table:differen_matric_sam}, we can find that using  $\frac{1}{2}\left(\hL(\hD^{vl};\vtheta^\star(\rho)) - \hL(\hD^{tr};\vtheta^\star(\rho))\right)^2$ 
achieves the best performance on both datasets, which suggests that it is a good objective for the upper-level problem.

\begin{table}[!t]
    \centering
    
    \caption{Classification accuracy (\%) on \textit{CIFAR-10} and \textit{CIFAR-100} for different generalization metrics on LETS-SAM. 
        The best is in
        \textbf{bold}.} 
    \begin{tabular}{c@{\hskip .07in}c@{\hskip .08in}c}
        \toprule
        &  {\textit{CIFAR-10}} & {\textit{CIFAR-100}}\\
        \midrule
        $\hL(\hD^{vl};\vtheta^\star(\rho))$ & $96.61 \pm 0.07$ & $80.54 \pm 0.06$ \\
        $\hL(\hD^{vl};\vtheta^\star(\rho)) \!-\! \hL(\hD^{tr};\vtheta^\star(\rho))$ & $96.75 \pm 0.18$ & $80.62 \pm 0.15$\\ \rowcolor{Gray}
        $\frac{1}{2}\!\left(\hL(\hD^{vl};\vtheta^\star(\rho)) \!-\! \hL(\hD^{tr};\vtheta^\star(\rho))\right)^2$ & $\mathbf{96.81}\pm0.02$ & $\mathbf{80.71} \pm 0.07$\\
        \bottomrule
    \end{tabular}
    \label{table:differen_matric_sam}
\end{table}

\subsection{Convergence}
In this experiment, we 
study whether the proposed LETS-SAM can converge as suggested in Theorem 4
of Appendix B.
Figure \ref{fig:train-loss-cifar100} (resp. Figure 9
in Appendix D.5)
shows the change of the training loss w.r.t. number of epochs for the experiment on \textit{CIFAR-100} (resp. \textit{CIFAR-10}) in Section \ref{sec:cifar}.
We can find that LETS-SAM and SAM exhibit comparable convergence speeds.
Similarly,
LETS-ASAM and ASAM empirically enjoy similar convergence rates.

\begin{figure}[!t]
    \centering
    \subfigure[\textit{ResNet-18}. \label{fig:train-loss-cifar-100-resnet10}]{\includegraphics[width=0.3\textwidth]{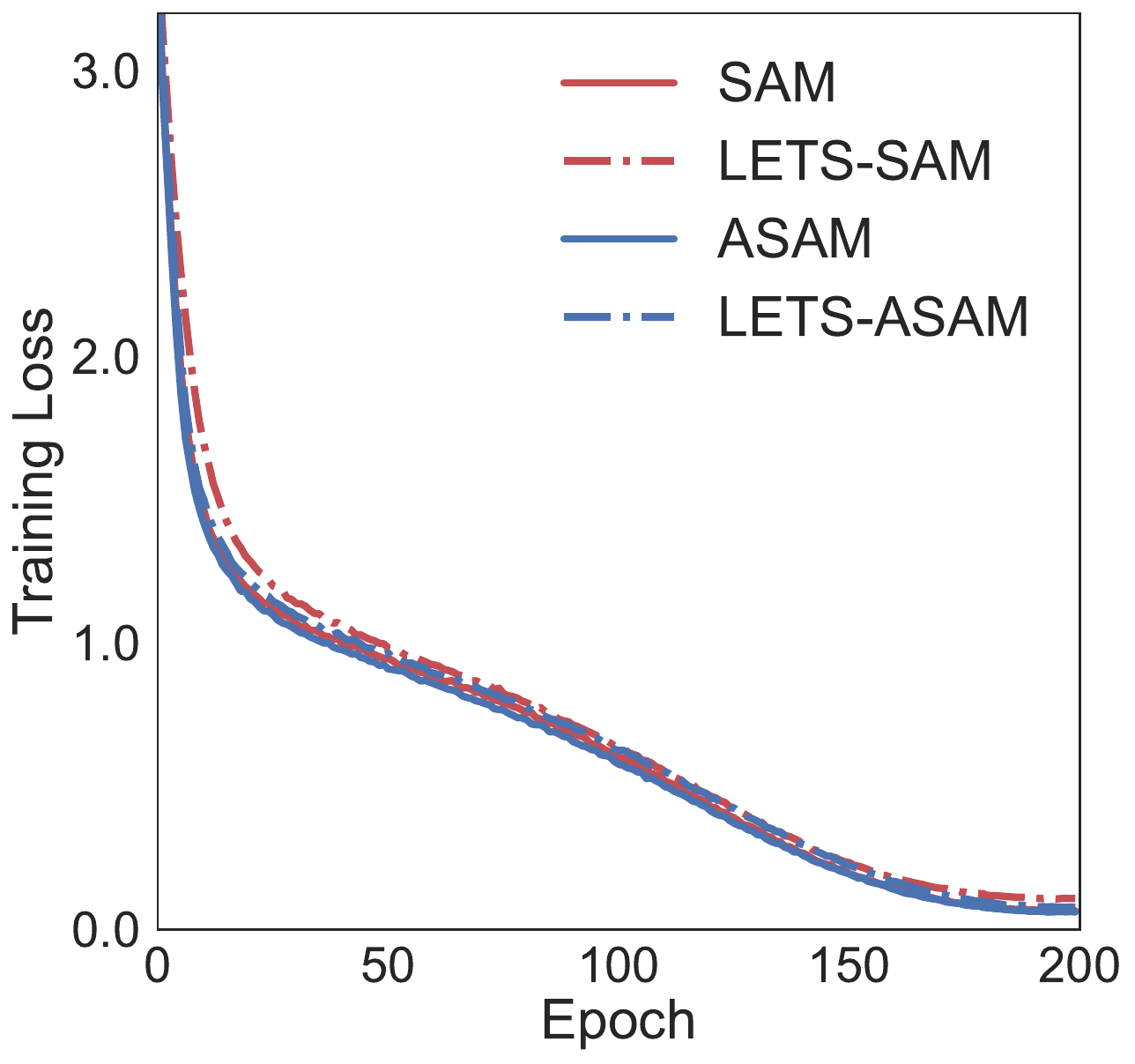}}
    \quad 
    \subfigure[\textit{WideResNet-28-10}. \label{fig:train-loss-cifar-100-wrn}]{\includegraphics[width=0.3\textwidth]{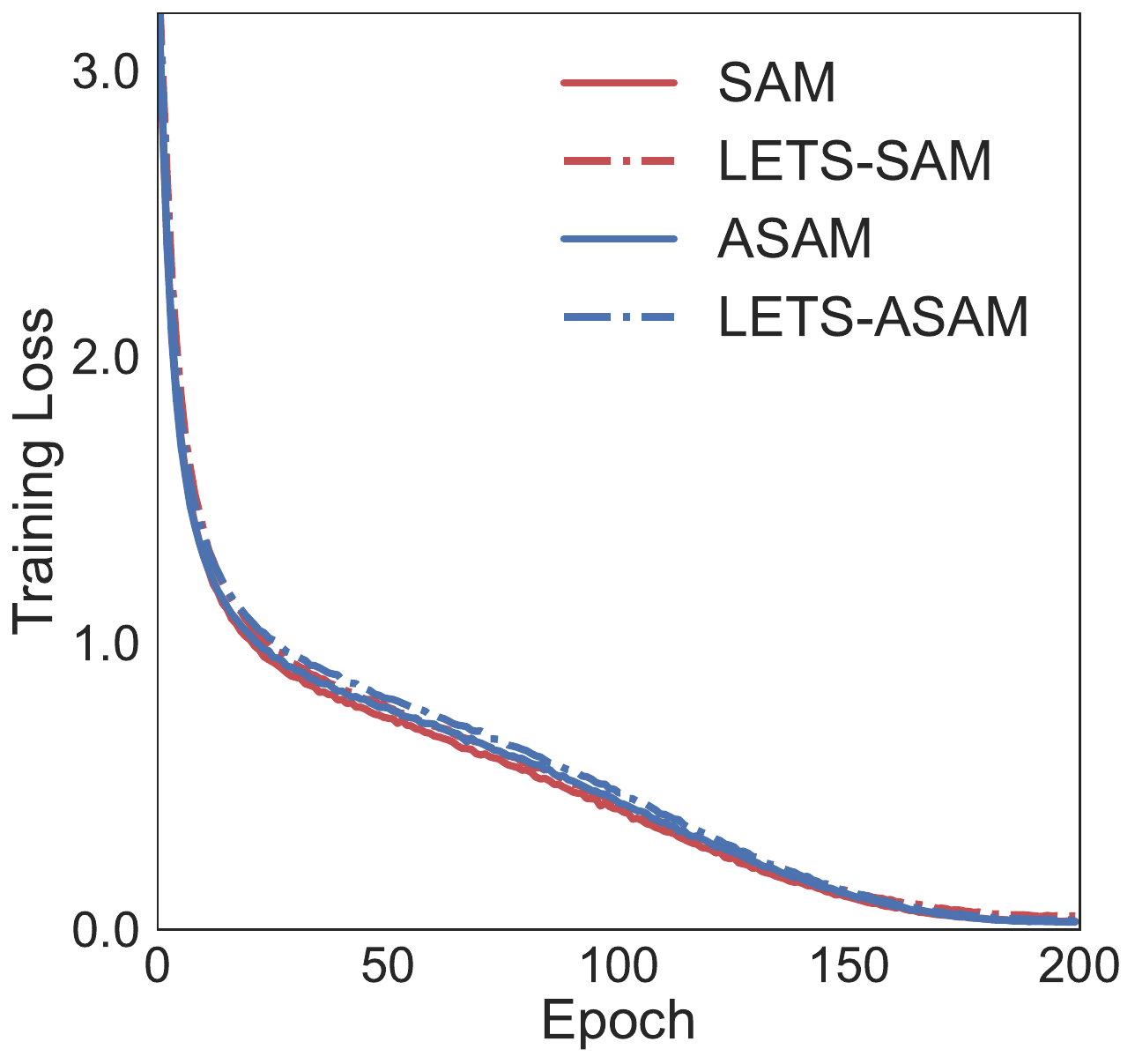}}
    \quad 
    \subfigure[\textit{PyramidNet-110}. \label{fig:train-loss-cifar-100-pym}]{\includegraphics[width=0.3\textwidth]{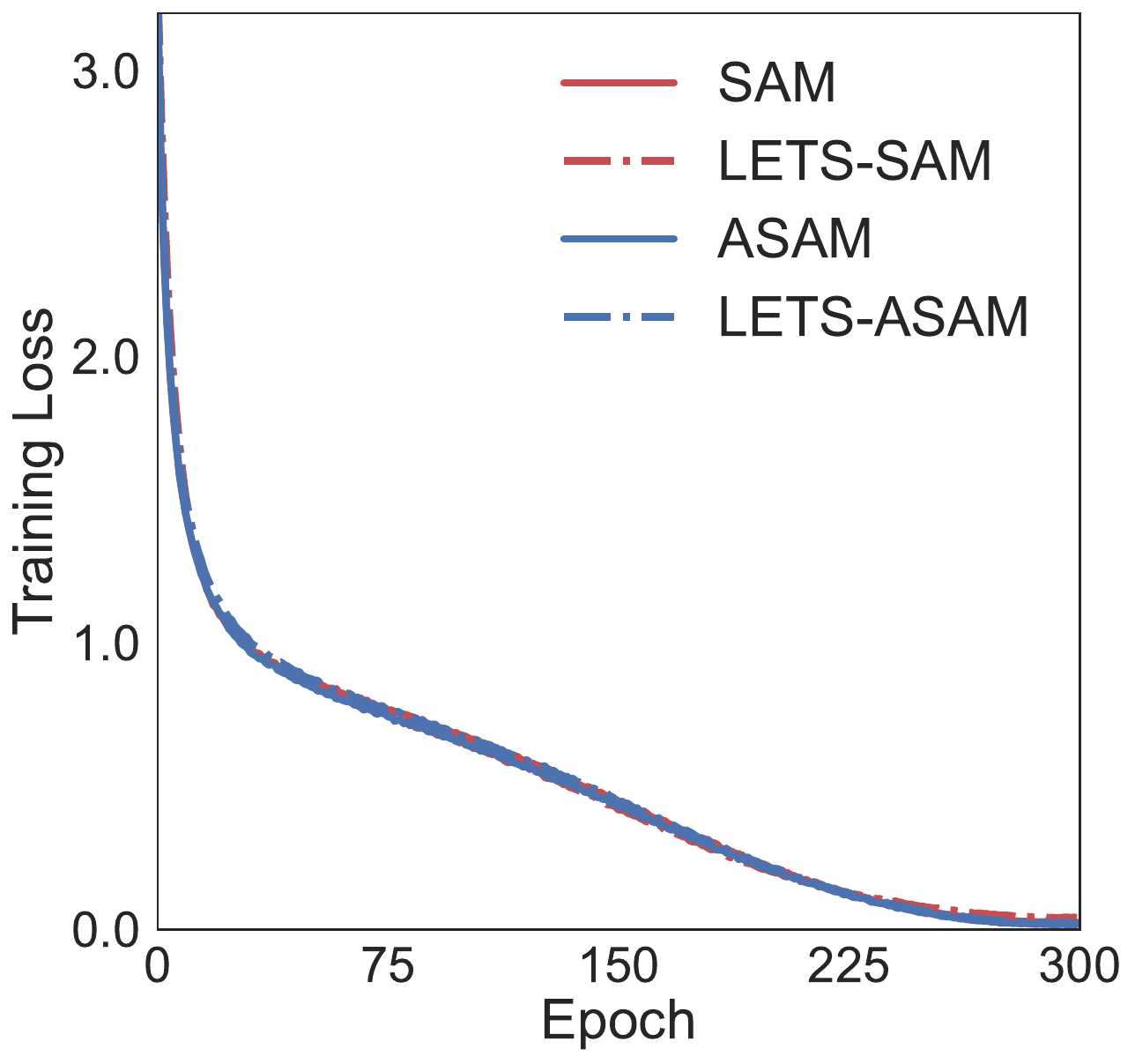}}
    \caption{Training loss
        w.r.t. training epochs
        on \textit{CIFAR-100}. 
        Best viewed in color.
    }
    \label{fig:train-loss-cifar100}
\end{figure}

\section{Conclusion}
In this paper, we study the problem of learning the perturbation radius in sharpness-aware minimization.
The proposed LETS method formulates it as a bilevel optimization problem and proposes a gradient-based algorithm to update the model parameters and the radius alternatively. 
Extensive experiments demonstrate the effectiveness of the proposed LETS method across multiple tasks and various network architectures.
The proposed LETS method is general and can be combined with any SAM algorithm, as shown by the success of LETS-ASAM.

\section*{Acknowledgements}

This work is supported by NSFC key grant under grant no. 62136005, NSFC general grant under grant no. 62076118, and Shenzhen fundamental research program JCYJ20210324105000003.

        \onecolumn
	\appendix
 \section{Derivation of gradient descent step for updating $\rho_{t+1}$ in LETS-SAM (step \ref{step:rho-c} in Algorithm \ref{alg:bsam})}
 \label{app:gradient}
		\vskip -.3in
\begin{align*}
\rho_{t+1} \!=& \rho_{t} \!-\! \beta \nabla_{\rho_{t}} \frac{1}{2}(\hL(\hD^{vl};\vtheta_{t+1} (\rho_{t})) - \hL(\hD^{tr};\vtheta_{t+1} (\rho_{t})))^2 \\
\text{(By chain rule)} \!=& \rho_t \!-\! \beta\nabla_{\vtheta_{t+1}}^\top\frac{1}{2}(\hL(\hD^{vl};\vtheta_{t+1}) 
- \hL(\hD^{tr};\vtheta_{t+1}))^2 \nabla_{\rho_{t}}\vtheta_{t+1}\\
\text{(By chain rule and Eq.~(\ref{eq:theta_t}))}\!=& \rho_t \!-\! \beta \nabla_{\vtheta_{t+1}}^\top\frac{1}{2}(\hL(\hD^{vl};\vtheta_{t+1}) 
- \hL(\hD^{tr};\vtheta_{t+1}))^2\\
&\cdot (-\eta \nabla_{\rho_t}\nabla \hL(\left(\hD^{tr};\vtheta_{t}+\rho_{t}\hat{\vepsilon}_{t}^{\text{(SAM)}}\right)))\\
\text{(By chain rule)}\!=&\rho_{t} \!+\! \frac{\beta\eta}{2} \nabla_{\vtheta_{t+1}}^\top(\hL(\hD^{vl};\vtheta_{t+1}) 
- \hL(\hD^{tr};\vtheta_{t+1}))^2 \\
& \cdot \nabla^2 \! \hL\left(\hD^{tr};\vtheta_{t}\!+\!\rho_{t}\hat{\vepsilon}_t^{\text{(SAM)}}\right) \!
\hat{\vepsilon}_{t}^{\text{(SAM)}}.
\end{align*}
 
\section{Theoretical Analysis of Convergence}
	\label{sec:conv}

In this section, we study the convergence of LETS-SAM. 
The following assumptions on the smoothness and bounded variance of stochastic gradients are standard in 
	the literature on nonconvex optimization~\cite{reddi2016stochastic} 
	and SAM~\cite{andriushchenko2022towards,qu2022generalized,jiang2023adaptive}. 
	The assumption on bounded gradients is also standard in SAM~\cite{mi2022make}.
	
	\begin{assumption}[Smoothness]
		\label{ass:smooth}
		$\hL(\hD^{tr}; \vtheta)$ is $\gamma$-smooth in $\vtheta$, 
		i.e., $\|\nabla \hL(\hD^{tr};\vtheta) - \nabla \hL(\hD^{tr};\vtheta') \| \leq \gamma \| \vtheta - \vtheta' \|$.
	\end{assumption}
	\begin{assumption}[Bounded variance of stochastic gradients]
		\label{ass:bd-noise}
		$\bE_{(\vx_i, y_i) \sim \hD^{tr}} \|$ $ \nabla \ell(f(\vx_i;\vtheta), y_i) - \nabla \hL(\hD^{tr}; \vtheta) \|^2 \leq \sigma^2$.
	\end{assumption}
	\begin{assumption}[Bounded gradients]
		\label{ass:grad-norm}
		$\|\nabla \hL(\hD^{tr};\vtheta) \| \leq \zeta$.
	\end{assumption}
	
	Based on those assumptions, we have the following theorem with the proof in Appendix \ref{sec:proof}.
	
	\begin{theorem}\label{thm}
		Let $b$ be the mini-batch size.
		If stepsize $\eta=\frac{1}{\gamma\sqrt{T}}$
		and $\rho_t\leq \frac{\kappa}{\sqrt{T}}$ (where $\kappa>0$ is a constant), Algorithm LETS-SAM satisfies
		\begin{align}
			\min_{0\leq t \leq T-1}  \bE \| \nabla\hL(\hD^{tr}; \vtheta_{t})\|^2
			&\leq \frac{\gamma \bE\hL(\hD^{tr}; \vtheta_{0})}{2 \sqrt{T}} +  \frac{\gamma\zeta \kappa + \frac{\sigma^2}{b} + 2\zeta^2 + \gamma \kappa^2}{2 \sqrt{T}},
		\end{align}
		where the expectation is taken over the
		random training samples.
	\end{theorem}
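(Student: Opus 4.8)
The plan is to follow the standard descent-lemma analysis for nonconvex stochastic optimization, treating the learned radius $\rho_t$ as an exogenous sequence that enters the parameter update only through the uniform bound $\rho_t \le \kappa/\sqrt{T}$; this is what decouples the upper-level ($\rho$) dynamics from the lower-level ($\vtheta$) convergence. Writing the lower-level update as $\vtheta_{t+1} = \vtheta_t - \eta\hat{\vg}_t^{tr}$ with $\hat{\vg}_t^{tr} = \nabla\hL(\hB_t^{tr};\vtheta_t + \rho_t\hat{\vepsilon}_t^{\text{(SAM)}})$ and $\|\rho_t\hat{\vepsilon}_t^{\text{(SAM)}}\| = \rho_t$, I would first invoke the $\gamma$-smoothness of Assumption \ref{ass:smooth} to get the one-step inequality
\begin{align*}
\hL(\hD^{tr};\vtheta_{t+1}) \le \hL(\hD^{tr};\vtheta_t) - \eta\langle\nabla\hL(\hD^{tr};\vtheta_t),\hat{\vg}_t^{tr}\rangle + \tfrac{\gamma\eta^2}{2}\|\hat{\vg}_t^{tr}\|^2 .
\end{align*}

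Next I would control the two data-dependent terms in conditional expectation over the mini-batch. For the cross term, I would split $\hat{\vg}_t^{tr} = \vg_t^{tr} + (\hat{\vg}_t^{tr} - \vg_t^{tr})$; unbiasedness of the stochastic gradient gives $\bE\langle\nabla\hL,\vg_t^{tr}\rangle = \|\nabla\hL\|^2$, while smoothness bounds the perturbation-induced deviation by $\|\hat{\vg}_t^{tr} - \vg_t^{tr}\| \le \gamma\rho_t$, so that Cauchy--Schwarz together with the bounded-gradient Assumption \ref{ass:grad-norm} yields $\bE\langle\nabla\hL,\hat{\vg}_t^{tr}\rangle \ge \|\nabla\hL\|^2 - \gamma\zeta\rho_t$ (a Young split of this cross term, with carefully chosen weights, is where the $\tfrac12$ factors and the $\gamma\kappa^2$-type contribution will ultimately originate). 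For the quadratic term I would bound $\bE\|\hat{\vg}_t^{tr}\|^2$ by combining the bounded-gradient bound at the perturbed point with the batch-variance Assumption \ref{ass:bd-noise}, giving something of the form $2\zeta^2 + \sigma^2/b$ (plus an $O(\gamma^2\rho_t^2)$ remainder).

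Then I would rearrange to isolate $\eta\bE\|\nabla\hL(\hD^{tr};\vtheta_t)\|^2$, sum over $t=0,\dots,T-1$ so that the function-value terms telescope into $\bE\hL(\hD^{tr};\vtheta_0) - \bE\hL(\hD^{tr};\vtheta_T) \le \bE\hL(\hD^{tr};\vtheta_0)$, and divide by the appropriate multiple of $T\eta$. Substituting $\eta = 1/(\gamma\sqrt{T})$ converts the leading telescoped term into the $\gamma\bE\hL(\hD^{tr};\vtheta_0)/(2\sqrt{T})$ piece and the variance/gradient-norm terms into the $(\sigma^2/b + 2\zeta^2)/(2\sqrt{T})$ piece; using $\rho_t \le \kappa/\sqrt{T}$ (so $\sum_t\rho_t \le \kappa\sqrt{T}$ and $\sum_t\rho_t^2 \le \kappa^2$) turns the perturbation terms into the $\gamma\zeta\kappa/(2\sqrt{T})$ and $\gamma\kappa^2/(2\sqrt{T})$ pieces, with the remaining $\rho_t^2$ contributions being $O(T^{-3/2})$ and hence absorbed. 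Finally, bounding the minimum over $t$ by the average over $t$ gives the claimed bound.

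The main obstacle is the bias created because the ascent direction $\hat{\vepsilon}_t^{\text{(SAM)}}$ is computed from the same mini-batch $\hB_t^{tr}$ as the descent gradient $\hat{\vg}_t^{tr}$: consequently $\hat{\vg}_t^{tr}$ is not an unbiased estimate of any fixed-point gradient, and its conditional mean is not $\nabla\hL(\hD^{tr};\vtheta_t)$. The resolution is to never require unbiasedness of the full SAM gradient, but only to compare it to the plain stochastic gradient $\vg_t^{tr}$ (which is unbiased) and charge the difference to smoothness times $\rho_t$; the bounded-gradient assumption then keeps all the resulting error terms at the scale $\rho_t = O(1/\sqrt{T})$, which is exactly what preserves the $\mathcal{O}(1/\sqrt{T})$ rate despite the adaptively learned radius. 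A secondary point to verify is that the $\rho$-update itself needs no control beyond the hypothesized bound $\rho_t \le \kappa/\sqrt{T}$, so that the convergence of $\vtheta_t$ is genuinely insensitive to how $\rho_t$ is learned.
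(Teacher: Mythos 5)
Your proposal follows essentially the same route as the paper's proof: a smoothness-based descent lemma, a bias bound for the cross term obtained by comparing the stochastic SAM gradient to the unperturbed stochastic gradient (smoothness gives the $\gamma\rho_t$ deviation, then Cauchy--Schwarz with bounded gradients yields $-\bE\|\nabla\hL(\hD^{tr};\vtheta_t)\|^2 + \gamma\zeta\rho_t$), a variance-plus-bounded-gradient bound for the quadratic term, and then telescoping with $\eta = \frac{1}{\gamma\sqrt{T}}$, $\rho_t \le \frac{\kappa}{\sqrt{T}}$, and the min-over-average step. Your two intermediate bounds are exactly the paper's Claims 1 and 2 (the paper's Claim 2 retains an extra $-\bE\|\nabla\hL(\hD^{tr};\vtheta_t)\|^2$ term via the identity $\|\va\|^2 = \|\va-\vb\|^2 - \|\vb\|^2 + 2\va^\top\vb$, but this only affects constant factors), so the approaches coincide up to bookkeeping of constants.
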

	
	The $\hO\left(\frac{1}{\sqrt{T}}\right)$ convergence rate in Theorem \ref{thm} is the same as SAM~\cite{andriushchenko2022towards} and its variants~\cite{qu2022generalized,jiang2023adaptive}.
	Hence, adjusting the perturbation radius does not affect the convergence rate.

 \subsection{Proof}
	\label{sec:proof}

\begin{proof}[Proof of Theorem \ref{thm}] \\
To simplify notations, 
		let $\vg_{t+\frac{1}{2}} \equiv\nabla\hL\left(\hB_t^{tr}; \vtheta_{t} +\rho_t \frac{\nabla\hL(\hB_t^{tr}; \vtheta_{t})}{\|\nabla\hL(\hB_t^{tr}; \vtheta_{t})\|}\right)$ and $\vg^{\hD}_{t+\frac{1}{2}} \equiv\nabla\hL\left(\hD^{tr}; \vtheta_{t} +\rho_t \frac{\nabla\hL(\hB_t^{tr}; \vtheta_{t})}{\|\nabla\hL(\hB_t^{tr}; \vtheta_{t})\|}\right)$.

By Taylor expansion and Assumption \ref{ass:smooth}, we have
		\begin{align}
			&\bE\hL(\hD^{tr}; \vtheta_{t+1}) \nonumber\\
			\leq &  \bE\hL(\hD^{tr}; \vtheta_{t}) +
			\bE\nabla^\top\hL(\hD^{tr}; \vtheta_{t}) (\vtheta_{t+1} - \vtheta_t) + \frac{\gamma}{2} \bE\| \vtheta_{t+1} - \vtheta_t\|^2 \nonumber \\
			=&  \bE \hL(\hD^{tr}; \vtheta_{t}) -\eta \bE
			\nabla^\top\hL(\hD^{tr}; \vtheta_{t}) \vg_{t+
				\frac{1}{2}} + \frac{\gamma\eta^2}{2}\bE \|\vg_{t+\frac{1}{2}}\|^2 \nonumber\\
			\leq &  \bE\hL(\hD^{tr}; \vtheta_{t}) -\eta \bE \nabla^\top\hL(\hD^{tr}; \vtheta_{t}) \vg_{t+\frac{1}{2}}
			+ \gamma\eta^2\bE\left( \| \vg_{t+\frac{1}{2}} -\vg^{\hD}_{t+\frac{1}{2}} \|^2 + \|\vg^{\hD}_{t+\frac{1}{2}}\|^2\right)\nonumber \\
			=&  \bE\hL(\hD^{tr}; \vtheta_{t})  -\eta \bE \nabla^\top\hL(\hD^{tr}; \vtheta_{t}) \vg_{t+\frac{1}{2}}+ \frac{\gamma \eta^2 \sigma^2 }{b}+  \gamma\eta^2\bE \|\vg^{\hD}_{t+\frac{1}{2}}\|^2, \label{temp:xaskoi}
		\end{align}
		where the second inequality follows from the property $\|\va\|^2 \leq 2(\| \va -\vb\|^2 + \| \vb\|^2)$ for any two vectors $\va$ and $\vb$.
		We bound the second and last terms separately in the 
		following.
		
		\underline{\textbf{Claim 1}}: $-\bE \nabla^\top\hL(\hD^{tr}; \vtheta_{t}) \vg_{t+\frac{1}{2}} \leq - \bE \| \nabla\hL(\hD^{tr}; \vtheta_{t})\|^2
		+\rho_t \gamma \zeta$.
		
		By triangle inequality, we have
		\begin{align*}
			&\bE \nabla^\top\hL(\hD^{tr}; \vtheta_{t}) \vg_{t+\frac{1}{2}} \\
			&= \bE \nabla^\top\hL(\hD^{tr}; \vtheta_{t}) \left(\nabla\hL(\hD^{tr}; \vtheta_{t}) -  \nabla\hL(\hD^{tr}; \vtheta_{t}) + \vg_{t+\frac{1}{2}}\right) \\
			& = \bE \| \nabla^\top\hL(\hD^{tr}; \vtheta_{t})\|^2
			+ \bE \nabla^\top\hL(\hD^{tr}; \vtheta_{t}) \left(
			\vg_{t+\frac{1}{2}}- \nabla\hL(\hD^{tr}; \vtheta_{t}) \right) \\
			& \geq \bE \| \nabla\hL(\hD^{tr}; \vtheta_{t})\|^2
			- \bE\| \nabla\hL(\hD^{tr}; \vtheta_{t}) \| \| \bE_{\hB_t^{tr}} \left(
			\vg_{t+\frac{1}{2}}- \nabla\hL(\hD^{tr}; \vtheta_{t}) \right) \|\\
			& \geq \bE \| \nabla\hL(\hD^{tr}; \vtheta_{t})\|^2
			-\rho_t \gamma \zeta,	\end{align*}
		where we use Assumption \ref{ass:grad-norm} and $ \| \bE_{\hB_t^{tr}}  (
		\vg_{t+\frac{1}{2}}- \nabla\hL(\hD^{tr}; \vtheta_{t})  ) \| =  \| \bE_{\hB_t^{tr}}  (
		\vg_{t+\frac{1}{2}}- \nabla\hL(\hB_t^{tr}; \vtheta_{t})  ) \| \leq \gamma \rho_t \|\bE_{\hB_t}\frac{\nabla\hL(\hB_t^{tr}; \vtheta_{t})}{\|\nabla\hL(\hB_t^{tr}; \vtheta_{t})\|} \| \leq  \gamma\rho_t$ to obtain the last inequality.
		
		\underline{\textbf{Claim 2}}: $\bE\|\vg^{\hD}_{t+\frac{1}{2}}\|^2  \leq 2\zeta^2
		+
		\gamma\rho_t^2 - \bE  \| \nabla\hL(\hD^{tr}; \vtheta_{t})  \|^2$.
		
		Using the property $\| \va\|^2 = \| \va - \vb\|^2 - \| \vb\|^2 + 2 \va^\top \vb$, it follows that
		\begin{align}
			\bE\|\vg^{\hD}_{t+\frac{1}{2}}\|^2 
			=& 
			\bE\left( \| \vg^{\hD}_{t+\frac{1}{2}} - \nabla\hL(\hD^{tr}; \vtheta_{t}) \|^2 -  \| \nabla\hL(\hD^{tr}; \vtheta_{t})  \|^2 +
			2\nabla^\top\hL(\hD^{tr}; \vtheta_{t}) \vg^{\hD}_{t+\frac{1}{2}} \right) \nonumber\\
			=&2\bE\nabla^\top\hL(\hD^{tr}; \vtheta_{t})
			\vg^{\hD}_{t+\frac{1}{2}}
			+
			\bE \| \vg^{\hD}_{t+\frac{1}{2}}- \nabla\hL(\hD^{tr}; \vtheta_{t}) \|^2 - \bE  \| \nabla\hL(\hD^{tr}; \vtheta_{t})  \|^2. \label{temp:asdxk}
		\end{align}
		For the first term, 
		it follows that
		\begin{align}
			\bE\nabla^\top\hL(\hD^{tr}; \vtheta_{t})
			\vg^{\hD}_{t+\frac{1}{2}} \leq  
			\bE \|\nabla\hL(\hD^{tr}; \vtheta_{t})\|
			\|	\vg^{\hD}_{t+\frac{1}{2}}\| \leq \zeta^2,
			\label{eq: tempsxa}
		\end{align}
		where we have used Assumption \ref{ass:grad-norm} to obtain the last inequality.
		
		For the second term in Eq.~\eqref{temp:asdxk},
		by Assumption \ref{ass:smooth},
		it follows that
		\begin{align}
			\bE \|  \vg^{\hD}_{t+\frac{1}{2}} - \nabla\hL(\hD^{tr}; \vtheta_{t}) \|^2 \leq \gamma \| \vtheta_{t} +\rho_t \frac{\nabla\hL(\hB_t^{tr}; \vtheta_{t})}{\|\nabla\hL(\hB_t^{tr}; \vtheta_{t})\|} - \vtheta_{t} \|^2  = \gamma \rho^2_t. \label{temp:exao}
		\end{align}	
		Substituting \eqref{eq: tempsxa}  and \eqref{temp:exao}
		into \eqref{temp:asdxk}, we have
		\begin{align*}
			\bE\|\vg^{\hD}_{t+\frac{1}{2}}\|^2  \leq 2\zeta^2
			+
			\gamma\rho_t^2 - \bE  \| \nabla\hL(\hD^{tr}; \vtheta_{t})  \|^2.
		\end{align*}
		
		Using Claims 1 and 2, for \eqref{temp:xaskoi},
		we have 
		\begin{align*}
			&\bE\hL(\hD^{tr}; \vtheta_{t+1}) \\
			&\leq \bE\hL(\hD^{tr}; \vtheta_{t})
			- \eta (1+\gamma\eta ) \bE \| \nabla\hL(\hD^{tr}; \vtheta_{t})\|^2
			+\eta\rho_t \gamma \zeta
			+ \frac{\gamma \eta^2 \sigma^2}{b}
			+ \gamma \eta^2 (2\zeta^2
			+
			\gamma\rho_t^2) \\
			&\leq  \bE\hL(\hD^{tr}; \vtheta_{t}) 
			-  2\eta \bE \| \nabla\hL(\hD^{tr}; \vtheta_{t})\|^2
			+ \frac{\gamma\zeta \kappa+ \frac{\sigma^2}{b} + 2\zeta^2 + \gamma \kappa^2}{\gamma T}.
		\end{align*}
		Summing over both sides from $t=1$ to $T$, and rearranging it, we have
		\begin{align*}
			2\eta \sum_{t=1}^{T} \bE \| \nabla\hL(\hD^{tr}; \vtheta_{t})\|^2 \leq \bE\hL(\hD^{tr}; \vtheta_{0}) + \frac{\gamma\zeta \kappa  + \frac{\sigma^2}{b} + 2\zeta^2 + \gamma \kappa^2}{\gamma}
		\end{align*}
		Hence, 
		\begin{align*}
			\min_{0\leq t \leq T-1}  \bE \| \nabla\hL(\hD^{tr}; \vtheta_{t})\|^2
			& \leq \frac{1}{T}  \sum_{t=1}^{T} \bE \| \nabla\hL(\hD^{tr}; \vtheta_{t})\|^2 \\
			& \leq \frac{\gamma \bE\hL(\hD^{tr}; \vtheta_{0})}{2 \sqrt{T}} +  \frac{\gamma\zeta \kappa + \frac{\sigma^2}{b} + 2\zeta^2 + \gamma \kappa^2}{2 \sqrt{T}},
		\end{align*}
		and we finish the proof.
	\end{proof}

\section{\textit{GLUE} Benchmark}
\label{app:dataset}

The General Language Understanding Evaluation (\textit{GLUE}) benchmark is a collection of resources for training, evaluating, and analyzing natural language understanding systems. \textit{GLUE} consists of nine language understanding corpora including natural language inference, question answering, paraphrase detection, sentiment analysis, linguistic acceptability, and text similarity. \textit{GLUE} covers a diverse range of dataset sizes, text genres, and degrees of difficulty. The task and metric of each corpus are shown in Table \ref{tab:GLUE-dataset}.

\begin{table}[!ht]
    \centering
    \caption{Details of the \textit{GLUE} benchmark.}
    \vskip -.1in
    \begin{tabular}{c|c|c}
        \toprule
        Corpus & Task & Metric \\
        \midrule
         CoLA & Acceptability &  Matthews Correlation\\
         SST & Sentiment & Accuracy\\
         MRPC & Paraphrase & Accuracy/F1\\
         STSB & Similarity & Pearson/Spearmanr\\
         QQP & Paraphrase & Accuracy/F1\\
         MNLI & NLI & Accuracy\\
         QNLI & QA/NLI & Accuracy\\
         RTE & NLI & Accuracy\\
         \bottomrule
    \end{tabular}
		\vskip -.1in
    \label{tab:GLUE-dataset}
\end{table}

 \section{Additional Experimental Results}
	\label{app:expt}
\subsection{\textit{IWSLT'14 DE-EN}}
The BLEU scores on \textit{IWSLT'14 DE-EN} are shown in Table \ref{tab:iwslt}.
We can find that
LETS-SAM is better than SAM, while 
LETS-ASAM outperforms ASAM,
demonstrating the effectiveness of the proposed LETS method.

\begin{table}[!ht]
		\centering
		\caption{BLEU scores on \textit{IWSLT'14 DE-EN}. The best result in each comparison
    group is \underline{underlined} and the best result across all the groups is in
    \textbf{bold}.}
   
		\vskip -.1in
		\begin{tabular}{cc}
			\toprule
            ERM  & $34.60 \pm 0.04$\\
            \midrule
            SAM & $34.70 \pm 0.07$\\ \rowcolor{Gray}
            LETS-SAM & $\mathbf{\underline{34.85}} \pm 0.02$\\
            \midrule
            ASAM & $34.60 \pm 0.13$\\ \rowcolor{Gray}
            LETS-ASAM & $\underline{34.72} \pm 0.10$\\
			\bottomrule
		\end{tabular}
		\vskip -.1in
		\label{tab:iwslt}
	\end{table}
 
\subsection{\textit{GLUE}} 
The results on eight corpora of \textit{GLUE} are shown in Table \ref{tab:glue}.
We can find that LETS-SAM is consistently better than SAM. Compared with ASAM, LETS-ASAM always has better performance, demonstrating the effectiveness of the proposed LETS. Furthermore, on \textit{SST}, \textit{MRPC}, \textit{STSB}, \textit{MNLI}, and \textit{QNLI} datasets,
LETS-SAM achieves the highest accuracy,
while LETS-SAM is the best on the \textit{CoLA}, \textit{QQP} and \textit{RTE} datasets.

  \begin{table*}[!t]
		\centering
		\caption{Experimental results on the GLUE development set. The better result in each comparison
    group is \underline{underlined} and the best result across all the groups is in
    \textbf{bold}.}
		\vskip -.1in
		\resizebox{0.98\textwidth}{!}{
		\begin{tabular}{ccccccccc}
			\toprule
              & CoLA & SST & MRPC & STSB & QQP & MNLI & QNLI & RTE\\
              \midrule
            ERM  & $64.37 \pm 0.27$ & $94.99 \pm 0.07$ & $89.95 \pm 0.42$ & $91.17 \pm 0.42$ & $91.77 \pm 0.08$ & $88.48 \pm 0.10$ & $93.41 \pm 0.37$ & $63.54 \pm 9.93$\\
            \midrule
            SAM & $64.40 \pm 0.40$ & $95.22 \pm 0.17$ & $90.36 \pm 0.57$ & $91.22 \pm 0.19$ & $92.04 \pm 0.17$ & $88.67 \pm 0.24$ & $93.58 \pm 0.05$ & $56.44 \pm 4.71$\\ \rowcolor{Gray}
            LETS-SAM & $\underline{65.17} \pm 0.81$ & $\mathbf{\underline{95.57}} \pm 0.06$ & $\mathbf{\underline{90.85}} \pm 0.14$ & $\mathbf{\underline{91.51}} \pm 0.06$ & $\underline{92.30} \pm 0.03$ & $\mathbf{\underline{88.88}} \pm 0.05$ & $\mathbf{\underline{93.90}} \pm 0.12$ & $\underline{74.01} \pm 2.96$\\
            \midrule
            ASAM & $63.89 \pm 0.63$ & $95.26 \pm 0.24$ & $90.28 \pm 0.28$ & $91.23 \pm 0.23$ & $92.14 \pm 0.11$ & $88.51 \pm 0.11$ & $93.51 \pm 0.04$ & $69.80 \pm 6.18$\\ \rowcolor{Gray}
            LETS-ASAM & $\mathbf{\underline{65.81}} \pm 0.44$ & $\underline{95.37} \pm 0.06$ & $\underline{90.52} \pm 0.37$ & $\underline{91.48} \pm 0.15$ & $\mathbf{\underline{92.34}} \pm 0.09$ & $\underline{88.82} \pm 0.20$ & $\underline{93.80} \pm 0.04$ & $\mathbf{\underline{74.85}} \pm 4.15$\\
			\bottomrule
		\end{tabular}
  }
		\vskip -.2in
		\label{tab:glue}
	\end{table*}

\subsection{Generalization Gap}
\label{app:gene_gap}
\begin{figure}[!t]
    \centering
    \!\!\!
    \subfigure[\textit{ResNet-18}. \label{fig:cifar-10-resnet10}]{\includegraphics[width=0.25\textwidth]{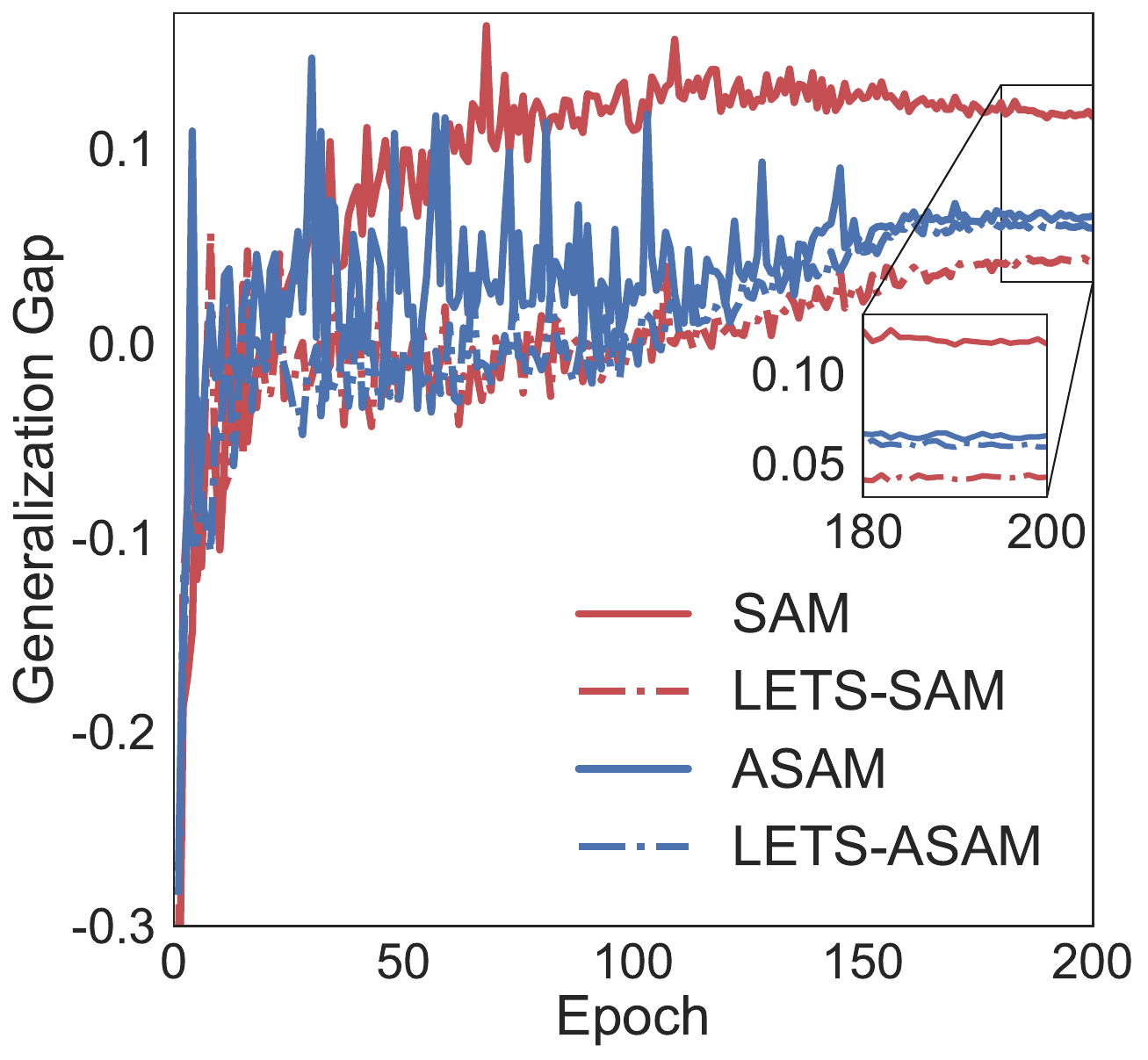}}
    \!\!\!
    \subfigure[\!\textit{WideResNet-28-10}. \label{fig:cifar-10-wrn}]{\includegraphics[width=0.25\textwidth]{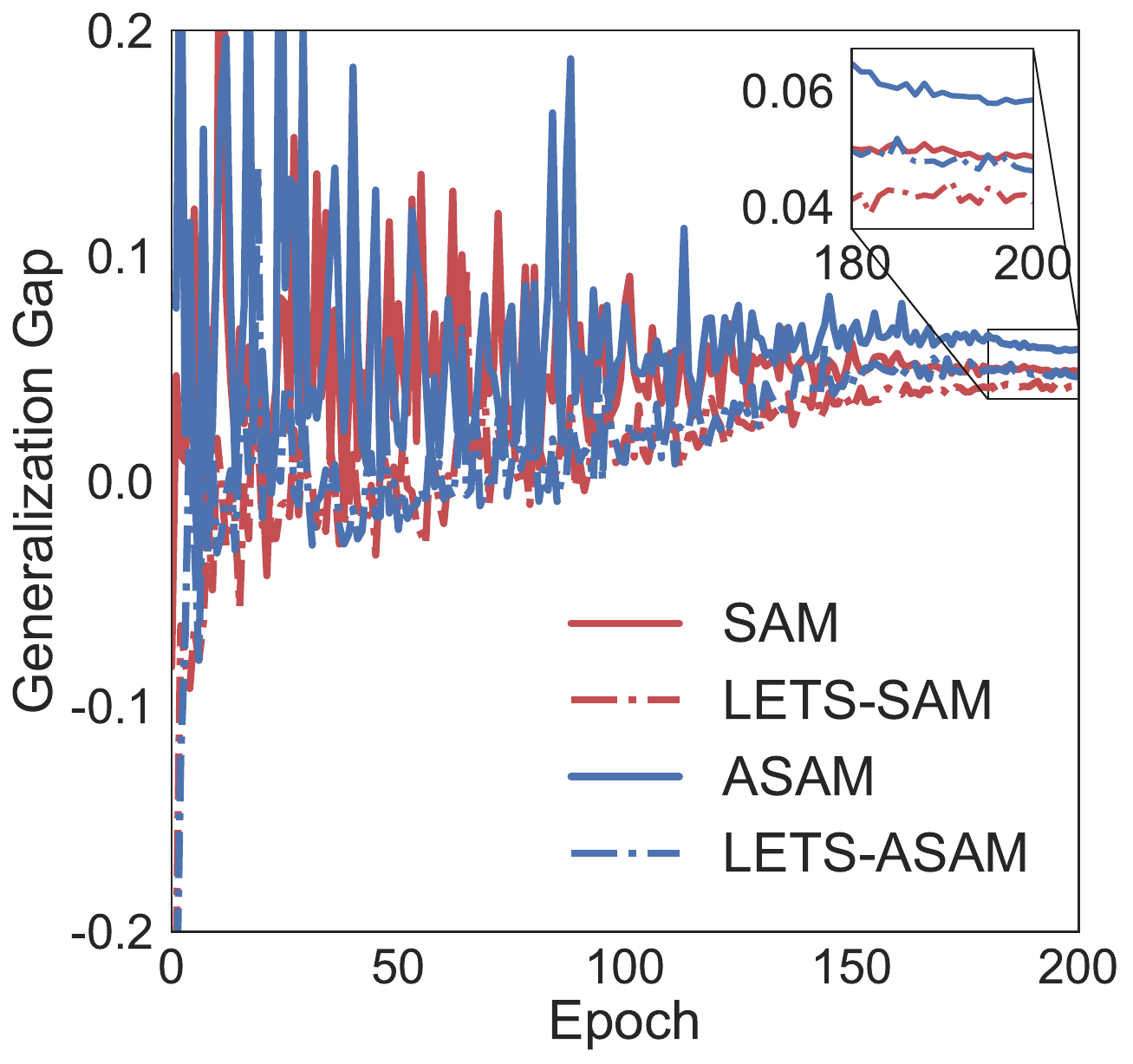}}
    \!
    \subfigure[\textit{PyramidNet-110}. \label{fig:cifar-10-pym}]{\includegraphics[width=0.25\textwidth]{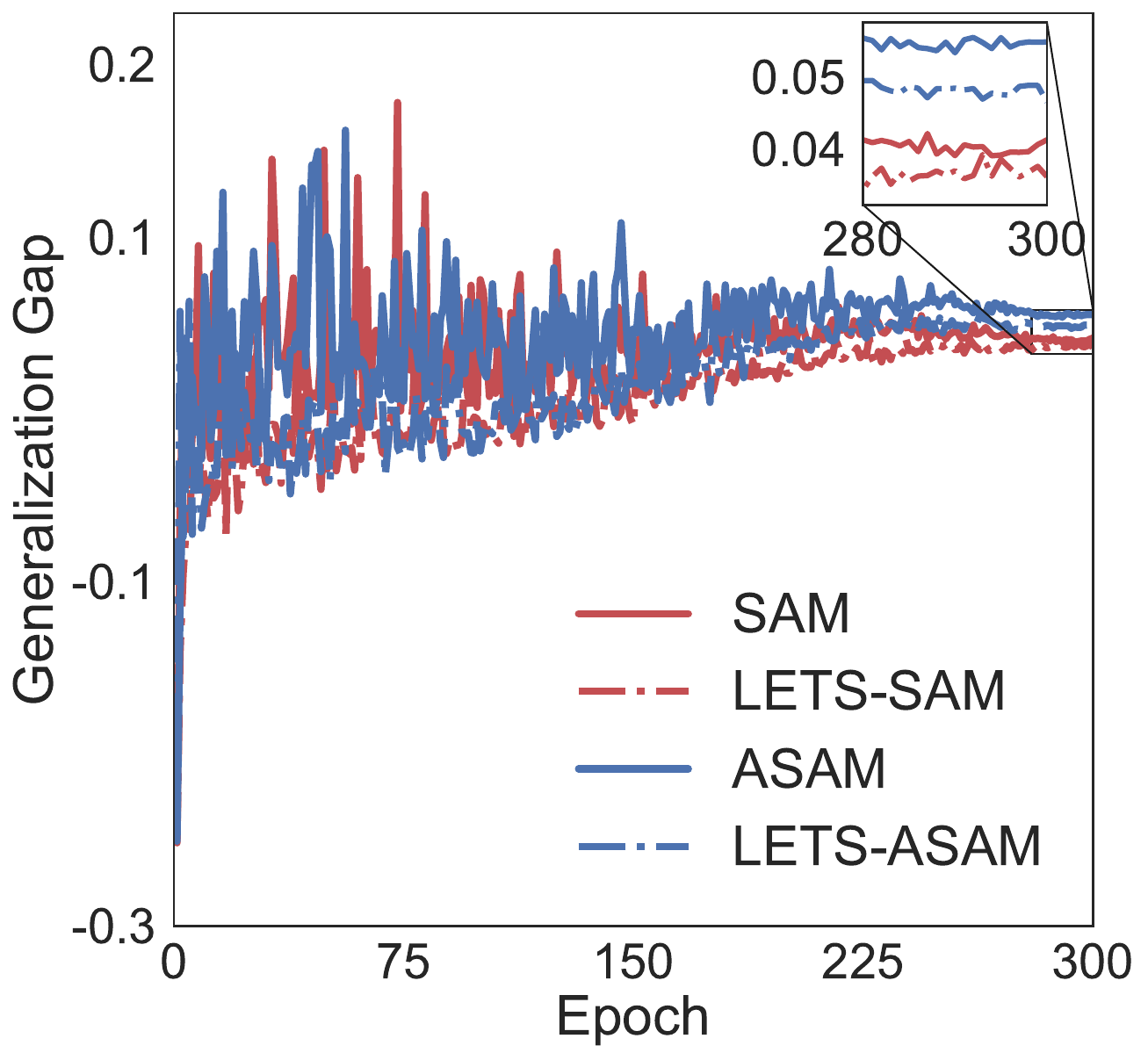}}
    \!\!\!
    \vskip -.25in
    \caption{Generalization gap
        w.r.t. training epochs
        on \textit{CIFAR-10}. 
        Best viewed in color.
    }
    \label{fig:generalization-gap-cifar10}
\end{figure}
Figure \ref{fig:generalization-gap-cifar10} show the generalization gap (i.e., $\hL(\hD^{ts}; \vtheta_t) - \hL(\hD^{tr}; \vtheta_t)$) w.r.t. training epochs on \textit{CIFAR-10} dataset. 
As shown,
LETS-SAM (resp. LETS-ASAM) has a smaller generalization gap than SAM (resp. ASAM) when the training process nearly converges,
verifying that learning the perturbation radius can reduce the generalization gap.

\subsection{Loss Landscapes}
\label{app:loss_lanscapes}

To illustrate the superior performance of the LETS method, we follow \cite{du2021efficient} to visualize the loss landscapes w.r.t. weight perturbations of SAM, LETS-SAM, ASAM, and LETS-ASAM.
Figure \ref{fig:landscape_resnet18_cifar100} shows the corresponding loss landscapes for different methods built on \textit{ResNet-18} on the \textit{CIFAR-100} dataset.
We can find that the model learned by LETS-SAM (resp. LETS-ASAM) has a flatter loss landscape than that of SAM (resp. ASAM). Since the flatness is a measure for generalization, those results could explain why using the proposed LETS method could lead to performance improvement.

\begin{figure}[!t]
\centering
\vskip -.1in
\!\!\!
\subfigure[SAM. \label{fig:cifar-100-resnet18-sam}]{\includegraphics[width=0.2\textwidth]{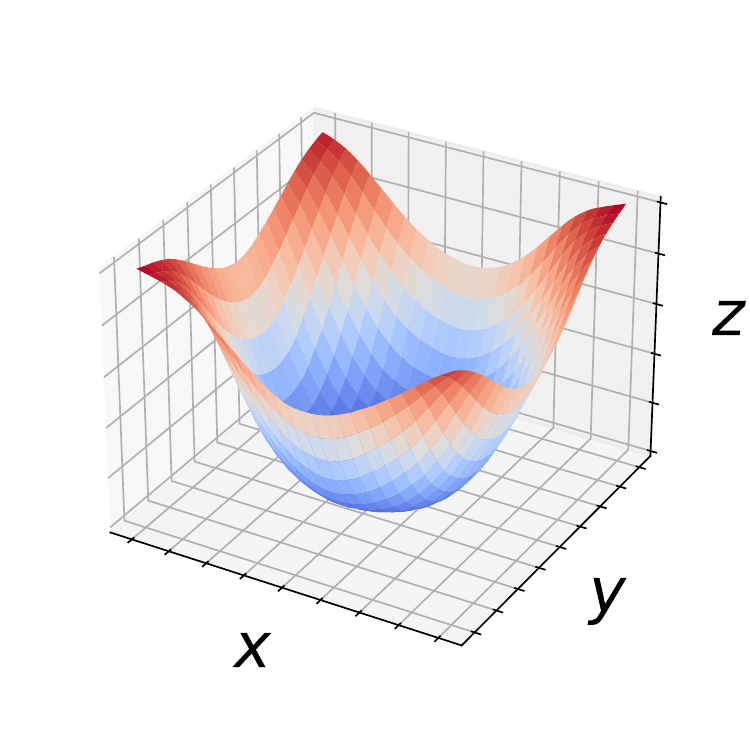}}
\!\!\!
\subfigure[LETS-SAM. \label{fig:cifar-100-resnet18-bsam}]{\includegraphics[width=0.2\textwidth]{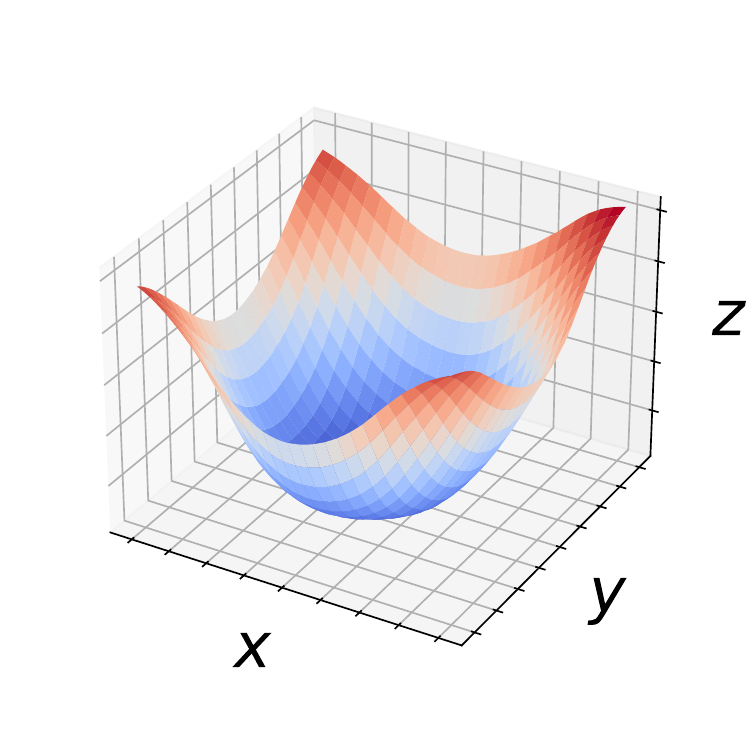}}
\!\!\!
\subfigure[ASAM. \label{fig:cifar-100-resnet18-asam}]{\includegraphics[width=0.2\textwidth]{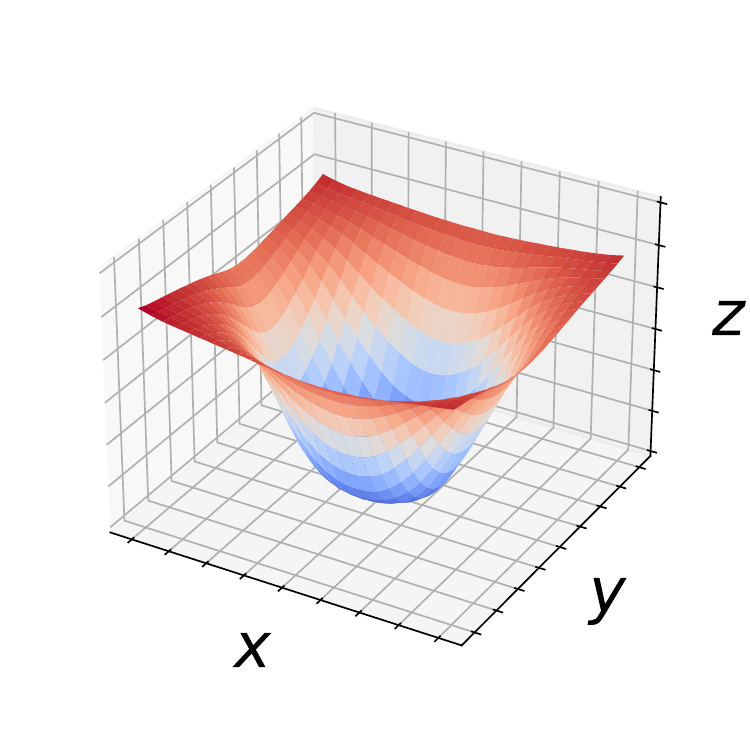}}
\!\!\!
\subfigure[LETS-ASAM. \label{fig:cifar-100-resnet18-basam}]{\includegraphics[width=0.2\textwidth]{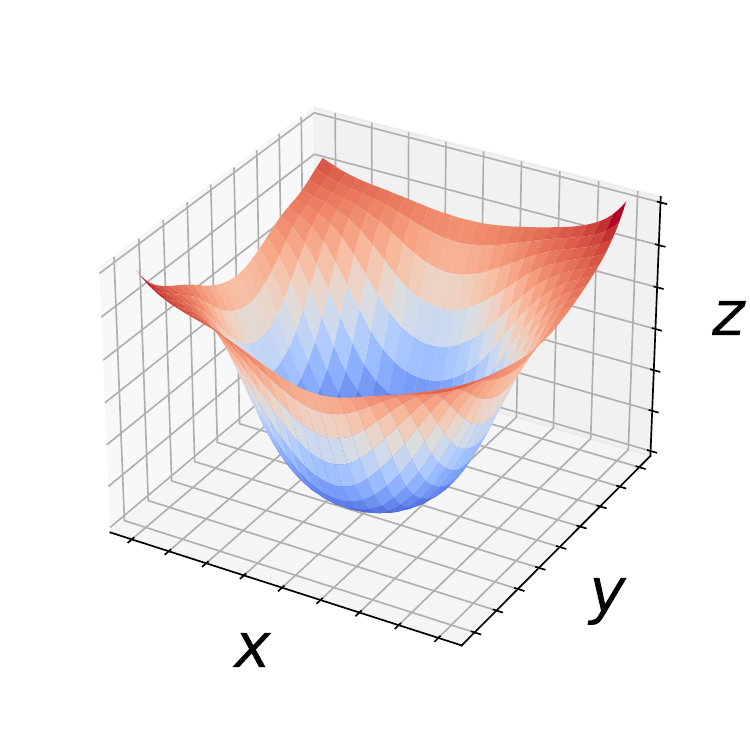}}
\!\!\!
\vskip -.25in
\caption{Loss landscapes of different methods built on \textit{ResNet-18} for \textit{CIFAR-100}, where x- and y-axes represent two orthogonal weight perturbations, while z-axis represents the loss value.}
\label{fig:landscape_resnet18_cifar100} 
\vskip -.15in
\end{figure}

\subsection{Convergence}
\label{app:convergence}

In this experiment, we 
study whether the proposed LETS-SAM can converge as suggested in Theorem \ref{thm}.
Figure \ref{fig:train-loss-cifar10} shows the change of the training loss w.r.t. the number of epochs for the experiment on \textit{CIFAR-10} in Section \ref{sec:cifar}.
As can be seen, LETS-SAM and SAM empirically enjoy similar convergence rates. 
Moreover, LETS-ASAM exhibits a similar convergence behavior to ASAM.

\begin{figure}[!ht]
    \centering
    \!\!\!
    \subfigure[\textit{ResNet-18}. \label{fig:train-loss-cifar-10-resnet10}]{\includegraphics[width=0.3\textwidth]{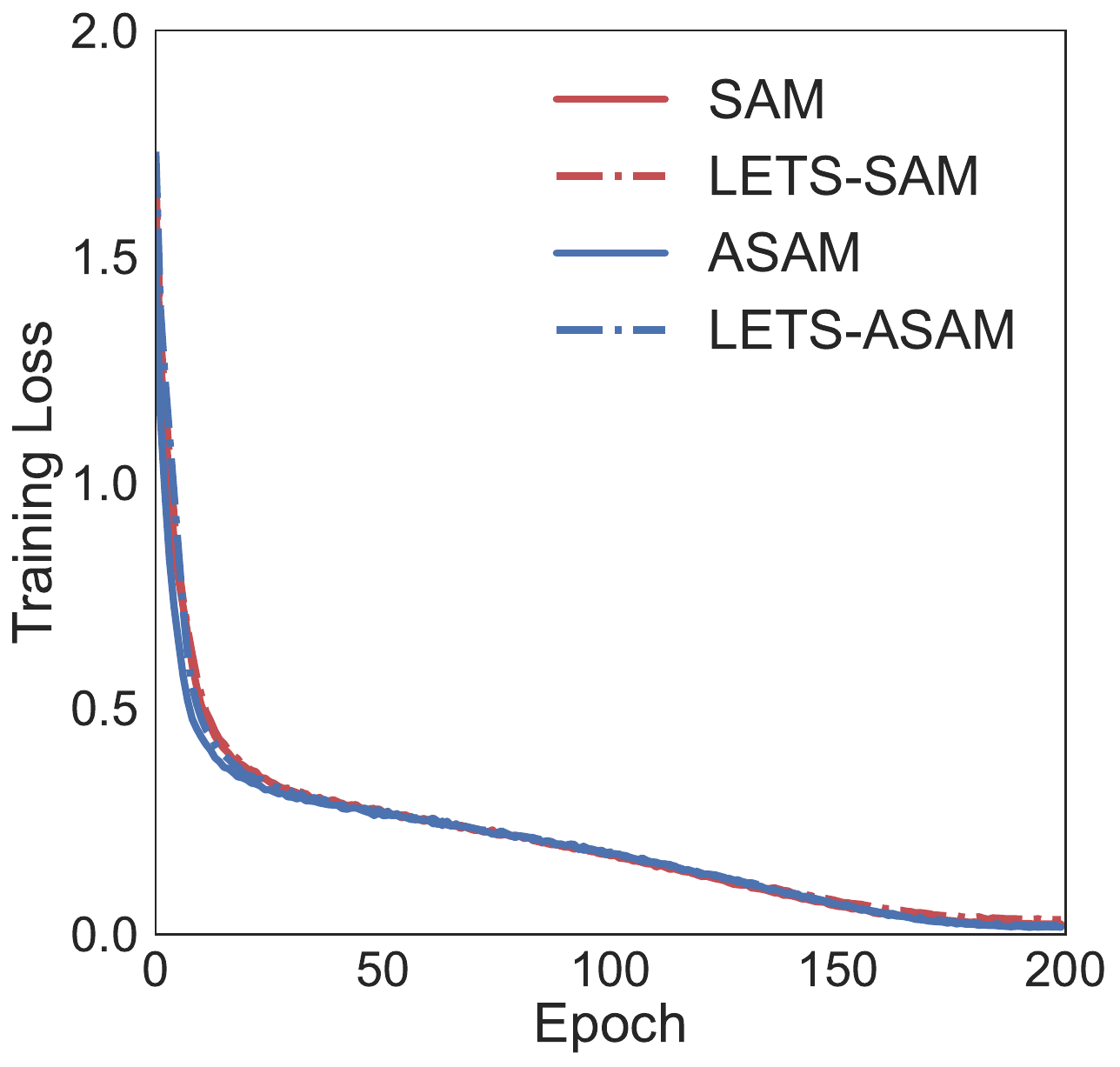}}
    \!\!\!
    \subfigure[\!\textit{WideResNet-28-10}. \label{fig:train-loss-cifar-10-wrn}]{\includegraphics[width=0.3\textwidth]{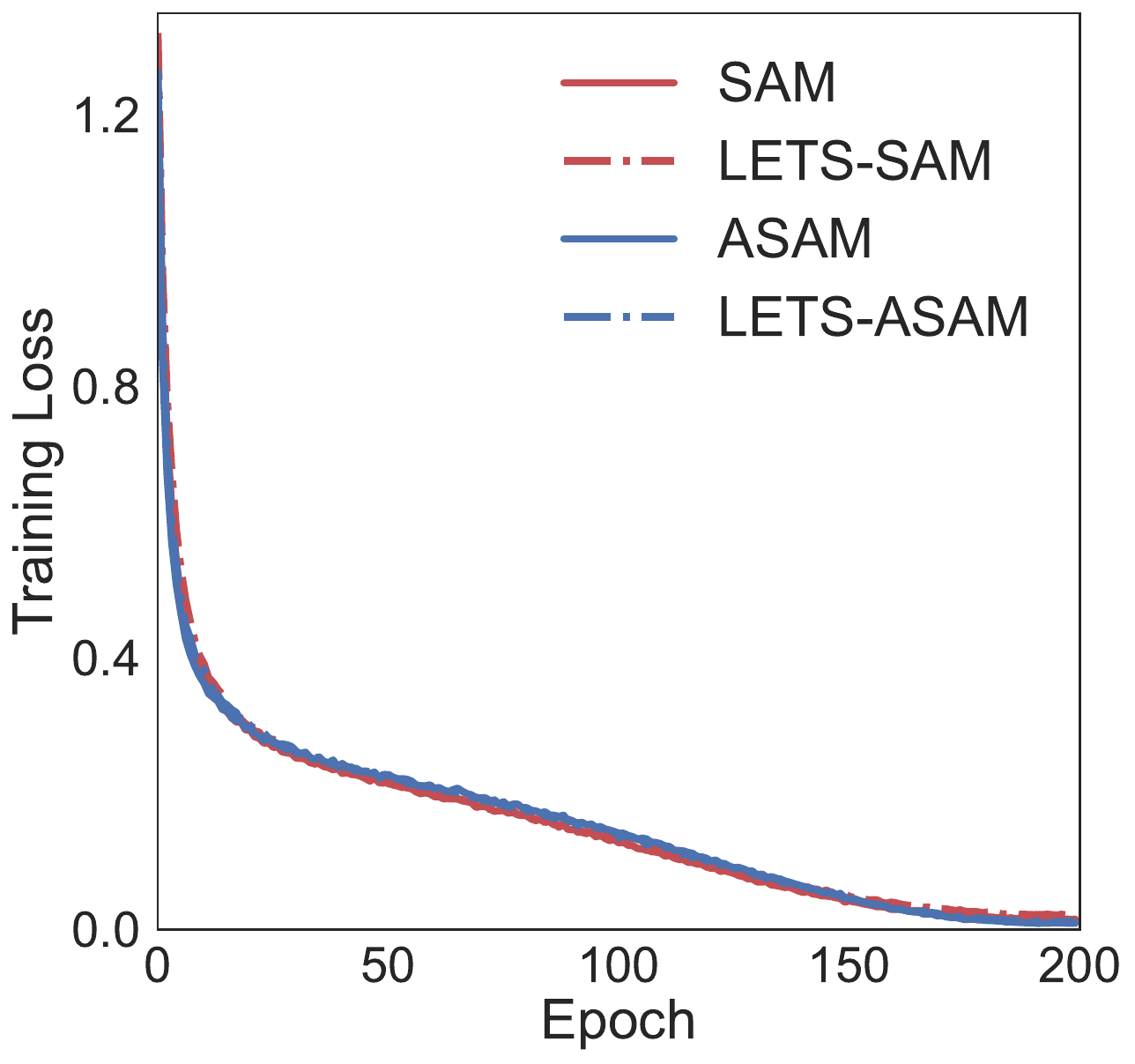}}
    \!
    \subfigure[\textit{PyramidNet-110} \label{fig:train-loss-cifar-10-pym}]{\includegraphics[width=0.3\textwidth]{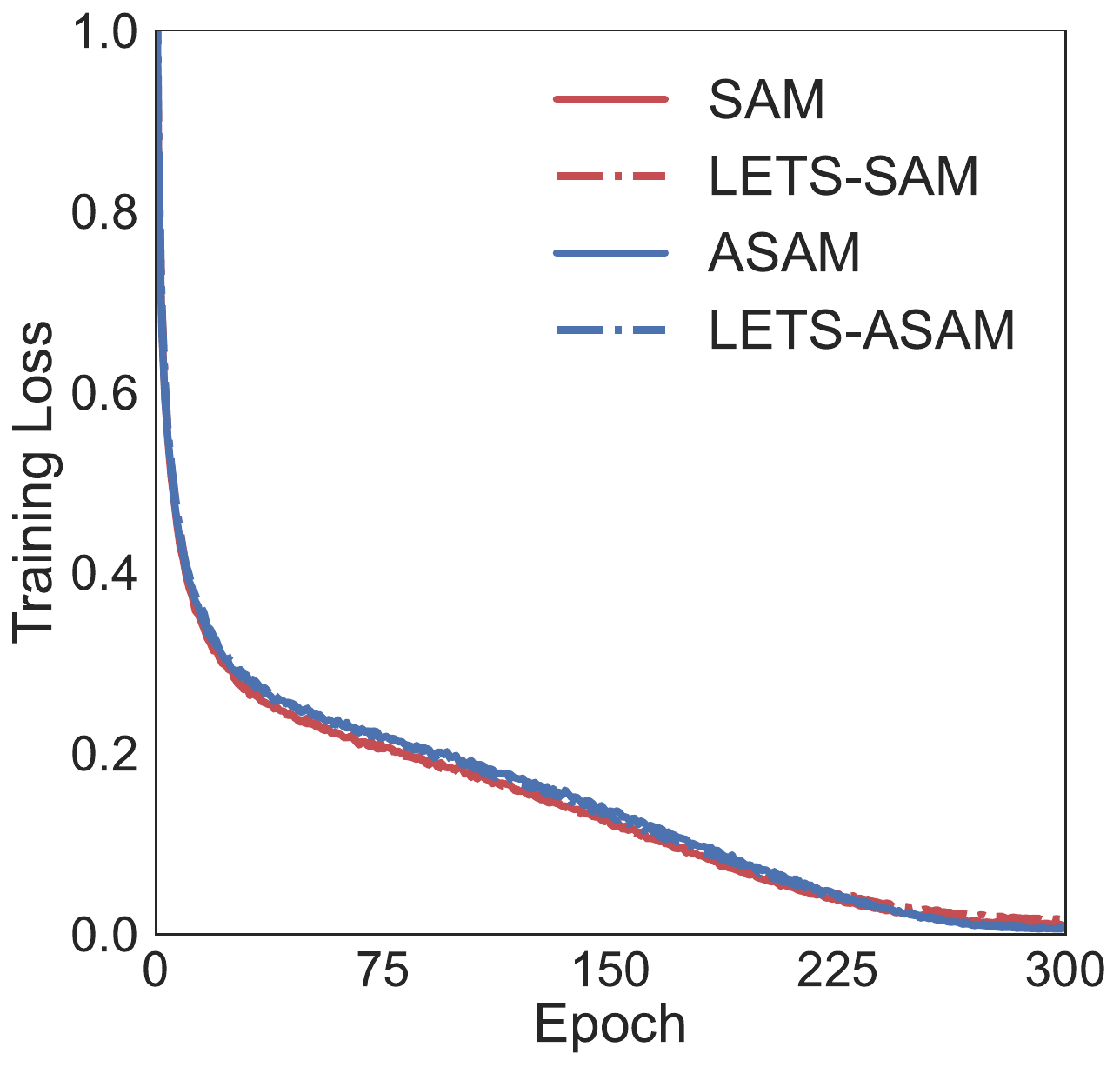}}
    \!\!\!
    \vskip -.15in
    \caption{Training loss w.r.t. training epochs
        on \textit{CIFAR-10}. 
        Best viewed in color.
    }
    \label{fig:train-loss-cifar10}
		\vskip -.3in
\end{figure}
	
	\section{Implementation Details}
	\label{app:hyp}
	
	Experiments on \textit{CIFAR-10} and \textit{CIFAR-100} are conducted on NVIDIA GeForce RTX $3090$ GPUs. 
	Experiments on \textit{ImageNet}, \textit{IWSLT'14 DE-EN} and \textit{GLUE} are conducted on NVIDIA A$100$ GPUs.

 Table \ref{table:impl-cifar} presents the hyperparameters employed in CV tasks. 
 For CV tasks, following experimental setups in \cite{foret2021sharpness,kwon2021asam,jiang2023adaptive}, both SAM (resp. ASAM) and LETS-SAM (resp. LETS-ASAM) use the same hyperparameter configurations. 
  Tables \ref{table:impl-glue} and \ref{table:impl-iwslt} present the hyperparameters employed on the \textit{GLUE} and \textit{IWSLT’14 DE-EN} datasets, respectively. 
  For the \textit{GLUE} dataset, we adopt the experimental setups in \cite{he2020deberta}. For the \textit{IWSLT'14 DE-EN} dataset, we follow the experimental setups used in \cite{kwon2021asam}. 

\begin{table}[!ht]
		\centering
		\vskip -.25in
		\caption{Hyperparameters for CV datasets.}
		\vskip -.15in
  \resizebox{0.7\textwidth}{!}{
		\begin{tabular}{c c c}
			\toprule
			& LETS-SAM & LETS-ASAM \\
			\midrule
			\textit{model parameters} & \\
			optimizer & \multicolumn{2}{c}{SGD optimizer for \textit{ResNet-18}, \textit{WideResNet-28-10}, \textit{PyramidNet-110} and \textit{ResNet-50}, Adam optimizer for \textit{ViT-S16}} \\
			initial learning rate & \multicolumn{2}{c}{0.1 for \textit{ResNet-18}, \textit{WideResNet-28-10}, \textit{PyramidNet-110} and \textit{ResNet-50}, 0.0001 for \textit{ViT-S16}}\\
			initialization & \textit{He Initialization}~\cite{he2015delving} & \textit{He Initialization}~\cite{he2015delving}\\ 
			weight decay & \multicolumn{2}{c}{0.0005 for \textit{ResNet-18}, \textit{WideResNet-28-10} and \textit{PyramidNet-110}, 0.0001 for \textit{ResNet-50}}\\
			momentum & 0.9 & 0.9\\
			learning rate schedule & Cosine & Cosine \\
			\midrule 
			\textit{perturbation radius $\rho$} & \\
			optimizer   & Adam optimizer & Adam optimizer\\
			Adam $\beta_1$ & $0.9$ & $0.9$ \\
			Adam $\beta_2$ & $0.999$ & $0.999$ \\
			initial learning rate & 0.0001 & 0.0001 \\
			learning rate schedule & Exponential & Exponential\\
			\midrule
			batch size & \multicolumn{2}{c}{128 for \textit{ResNet-18}, \textit{WideResNet-28-10} and \textit{PyramidNet-110}, 512 for \textit{ResNet-50}, 256 for \textit{ViT-S16}} \\
			$\xi$ & - & 0.01\\
			\#epoch & \multicolumn{2}{c}{200 for \textit{ResNet-18} and \textit{WideResNet-28-10}, 300 for \textit{PyramidNet-110}, 90 for \textit{ResNet-50}, 1200 for \textit{ViT-S16}} 
			\\
			\bottomrule
		\end{tabular}
  }
		\label{table:impl-cifar} 
  
		\vskip -.5in
	\end{table}

 \begin{table}[!ht]
		\centering
		\caption{Hyperparameters for \textit{GLUE}.}
		\vskip -.15in
  \resizebox{0.75\textwidth}{!}{
		\begin{tabular}{c c c c c}
			\toprule
			& SAM & LETS-SAM & ASAM & LETS-ASAM \\
			\midrule
			\textit{model parameters} & \\
			dropout rate & $\{0,0.1,0.15\}$ & $\{0,0.1,0.15\}$ & $\{0,0.1,0.15\}$ & $\{0,0.1,0.15\}$ \\
			warmup steps & $\{50, 100, 500, 1000\}$ & $\{50, 100, 500, 1000\}$ & $\{50, 100, 500, 1000\}$ & $\{50, 100, 500, 1000\}$ \\
			learning rate & $\{0,0.1,0.15\}$ & $\{0,0.1,0.15\}$ & $\{0,0.1,0.15\}$ & $\{0,0.1,0.15\}$ \\
            learning rate schedule & Linear & Linear & Linear & Linear \\
			Adam $\epsilon$ & $1e-6$ & $1e-6$ & $1e-6$ & $1e-6$ \\
			Adam $\beta_1$ & $0.9$ & $0.9$ & $0.9$ & $0.9$ \\
			Adam $\beta_2$ & $0.999$ & $0.999$ & $0.999$ & $0.999$ \\
			gradient clipping & $1.0$ & $1.0$ & $1.0$ & $1.0$ \\
			\midrule 
			\textit{perturbation radius $\rho$} & \\
			optimizer  & - & Adam optimizer & - & Adam optimizer\\
			Adam $\beta_1$ & - & $0.9$ &- & $0.9$ \\
			Adam $\beta_2$ & - & $0.999$ & - & $0.999$ \\
   initial learning rate & - & $\{0.0001, 0.0005, 0.001\}$ & - & $\{0.0001, 0.0005, 0.001\}$ \\
			learning rate schedule & & Exponential & & Exponential\\
			\midrule
			batch size & $\{16,32,48,64\}$ & $\{16,32,48,64\}$ & $\{16,32,48,64\}$ & $\{16,32,48,64\}$\\
			$\xi$ & - & - & 0.01 & 0.01
			\\
			\bottomrule
		\end{tabular}
  }
		\label{table:impl-glue} 
		\vskip -.5in
	\end{table}

 \begin{table}[!ht]
		\centering
		\caption{Hyperparameters for \textit{IWSLT’14 DE-EN}.}
		\vskip -.15in
  \resizebox{0.55\textwidth}{!}{
		\begin{tabular}{c c c c c}
			\toprule
			& SAM & LETS-SAM & ASAM & LETS-ASAM \\
			\midrule
			\textit{model parameters} & \\
			dropout rate & $0.3$ & $0.3$ & $0.3$ & $0.3$ \\
			learning rate & $0.0005$ & $0.0005$ & $0.0005$ & $0.0005$\\
			Adam $\beta_1$ & $0.9$ & $0.9$ & $0.9$ & $0.9$ \\
			Adam $\beta_2$ & $0.98$ & $0.98$ & $0.98$ & $0.98$ \\
            weight decay & $0.0001$ & $0.0001$ & $0.0001$ & $0.0001$\\
			\midrule 
			\textit{perturbation radius $\rho$} & \\
			optimizer  & - & Adam optimizer & - & Adam optimizer\\
			Adam $\beta_1$ & - & $0.9$ & - & $0.9$ \\
			Adam $\beta_2$ & - & $0.999$ & - & $0.999$ \\
			initial learning rate & - & $0.00001$ & - & $0.00001$ \\
			learning rate schedule &- & Exponential &- & Exponential\\
			\midrule
			$\xi$ & - & - & 0.01 & 0.01
			\\
   \#epoch & 50 & 50 & 50 & 50
			\\
			\bottomrule
		\end{tabular}
  }
		\label{table:impl-iwslt} 
		\vskip -.5in
	\end{table}

\end{document}